\documentclass{article}

\usepackage[preprint]{ewrl_2026}

\usepackage{graphicx} 

\usepackage{amsmath}   
\usepackage{amssymb}
\usepackage{amsfonts}  
\usepackage{hyperref}  
\usepackage{enumitem}
\usepackage{amsthm}
\usepackage{xcolor}
\usepackage{wrapfig}
\usepackage{subcaption}
\usepackage{caption}
\usepackage{thmtools}
\usepackage{thm-restate}

\newcommand{\smallsquare}{\scalebox{0.65}{$\blacksquare$}}
\newcommand{\R}{\mathbb{R}}
\newcommand{\universaltask}{{\mathcal{M}_\mathcal{U}}}
\newcommand{\emptytask}{{\mathcal{M}_\varnothing}}
\newcommand{\universalreward}{{r_\mathcal{U}}}
\newcommand{\emptyreward}{{r_\varnothing}}
\newcommand{\gplus}{\mathcal{G}^+}

\title{
A Goal-Set Characterization of Task Composition in the Boolean Task Algebra
}
\author{
    Eduardo Terrés-Caballero$^1$ \qquad
    Herke van Hoof$^2$
    \vspace{0.5em}
    \\
    $^1$Informatics Institute, University of Amsterdam \\
    $^2$AMLab, University of Amsterdam \\
}

\begin{document}

\maketitle

\begin{abstract}
The Boolean Task Algebra (BTA) provides a principled framework for zero-shot task composition in reinforcement learning by equipping goal-reaching tasks with Boolean operations. We revisit its structural assumptions and formalize a collapse in the space of optimal extended Q-value functions: in deterministic MDPs, every such function is fully determined by the universal and empty tasks. This makes the logarithmic set of base tasks proposed in the original BTA formulation redundant.
Building on this observation, we introduce a goal-set-based composition method that performs logical operations on goal sets and reconstructs composed value functions by selecting slices from the universal and empty value functions. This reduces learning costs for standard BTA and reduces composition time for both BTA and Skill Machines, while preserving policy performance.
Experiments across tabular, visual, function-approximation, and continuous-control domains show that learning additional base tasks does not yield better performance.
Finally, we study the stochastic setting and provide a counterexample showing that this collapse need not hold, that is, optimal composition may require accounting for exponentially many policies in the number of goals.
Code is available at \url{https://github.com/EduardoTerres/bta_paper}.
\end{abstract}

\section{Introduction}
Compositionality is a central principle for developing agents that can reason about and solve multiple tasks within a shared environment. When tasks exhibit structure, it becomes possible to express complex objectives as combinations of simpler ones, thereby enabling reuse of previously acquired behaviors. 
In this direction,
\cite{tasse2020bta} introduce a Boolean Task Algebra (BTA) and show that, under specific assumptions, the space of tasks admits the structure of a Boolean algebra.
By operating on reward functions and their associated extended value functions, the BTA enables exact logical composition of tasks, and provides a principled framework for zero-shot derivation of optimal policies for composite tasks. This algebraic structure further establishes a homomorphism between the task algebra and the algebra of optimal extended Q-value functions, allowing compositions to be expressed directly in extended value-function space.
This algebra requires learning a collection of base tasks whose compositions span the entire algebra. The original formulation proposes learning $\mathcal{O}(\lceil \log | \mathcal{G} | \rceil)$ such tasks in order to guarantee optimal representational completeness.

Following an insight from \cite{tasse2022generalisation} we formalize why this collapse in the BTA happens and what simplifications in terms of computational time for composition can be made to methods that build on this framework (e.g. \cite{tasse2024skillmachinestemporallogic}).
We prove that every optimal extended Q-function can be constructed from only two components: the extended Q-function of the universal task, which assigns the highest terminal reward to every goal, and that of the empty task, which assigns the lowest terminal reward to every goal.

This observation leads to a more direct formulation of the BTA.
Since a task is determined by the set of goals that yield the higher reward, composition can be performed by attending to the desired set of goals.
Extended Q-value functions for composed tasks can then be reconstructed by selecting the appropriate slices from only two learned extended Q-value functions, as long as the two cover the whole space of goals.
This eliminates the need to learn a Boolean basis of tasks, posing a series of computational advantages.
Additionally, we study how this optimization in composition time would work in the stochastic MDP setting.
In the deterministic case, composing in an optimal manner involves learning goal slices independently and then performing Boolean operations over the set of desired goals, which scales linearly with the size of the desired goal set.
For stochastic MDPs, we provide a counterexample showing that each desired goal set, a subset of the original goal set, may have a different optimal policy in the worst case.
Consequently, this independence between goals disappears, making the number of optimal policies to account for during composition to grow with the size of the power set of goals.

\paragraph{Contributions.}
The main contributions of this work are:
\begin{enumerate}
    \item We formalize a representational limitation in the BTA framework stated in \cite{tasse2022generalisation} that drops training cost from $\mathcal{O}(\lceil \log_2 |G| \rceil)$ to constant.
    We empirically show that this improvement does not hinder convergence in either tabular or function approximation settings, as well as show that training additional base tasks does not translate to better performance upon convergence.
    \item We propose a new method for task composition that requires only array lookups and no operations between learned value functions. We empirically confirm the computational gains in tabular, function approximation, visual and continuous-control environments. 
    \item We identify a limitation of deterministic BTA-style composition in stochastic MDPs. While deterministic tasks can be composed by independently combining goal slices, this independence disappears under stochastic transitions and requires operating through an exponential number, with respect to goal set size, of optimal policies.
\end{enumerate}

\section{Related work}

\paragraph{Zero-shot compositional learning in Reinforcement Learning.}
Early work on composition in control theory \citep{todorov2006lsmdp, todorov2009compositionalityofoptimalcontrollaws} demonstrated that Linearly Solvable MDPs admit value and policy decompositions by linearizing the Bellman equation.
Building upon this, \cite{barreto2018successorfeaturestransferreinforcement} introduce Successor Features (SFs) by generalizing successor representations \citep{dayan1993SF}. This framework achieves zero-shot compositionality by factorizing the value function into a dot product of environment dynamics (successor features) and task preferences (reward vectors), enabling the agent to solve novel tasks simply by linearly combining previously learned predictive maps.

\cite{schaul2015uvfa} introduce Universal Value Function Approximator (UVFA) to enable generalization across continuous goal spaces by conditioning value functions on goal representations.
\cite{borsa2018universalsuccessorfeaturesapproximators} later unified this with SF through Universal Successor Features (USFs) and overcome the limitation of discrete task sets inherent in standard SFs by extending the dynamics-reward decomposition to continuous goal spaces by employing a UVFA architecture to approximate the successor features themselves.
\cite{tasse2020bta} build on these ideas by introducing  \textit{extended} Q-value functions, which include a goal dimension in addition to the state and action dimensions.
This extra level of conditioning ultimately allows learning optimal policies.

While the previous methods focus on composing rewards, another stream of research addresses the composition of time and logic. This is primarily handled through Hierarchical Reinforcement Learning (HRL), which simplifies long-horizon tasks by grouping actions into reusable behaviors. Fundamental works like the Options framework \citep{sutton1999options} treat these behavior sequences as single steps, an idea extended by \cite{Andreas2016ModularMR} to chain sub-policies using symbolic sketches, and by \cite{pmlr-v70-vezhnevets17a} to separate high-level planning from low-level execution.
Finally, composition extends to formal specifications, where approaches replace fixed task identifiers with structured logic to enable generalization. By directly grounding Linear Temporal Logic \citep{vaezipoor2021ltl2action, kuric2024planning} or natural language \citep{hermann2017grounded}, these methods allow agents to interpret and execute novel complex constraints zero-shot, simply by parsing the logical structure of the command.

\paragraph{Predecessors and subsequent work on the Boolean Task Algebra.}
The Boolean Task Algebra builds on several works that established how value functions could be manipulated to represent logical operations. \cite{van-niekerk2019composing} demonstrated that while standard RL can bound value composition, Soft Q-learning supports exact disjunctive ($\lor$) composition, as shown by \cite{haarnoja2017reinforcementlearningdeepenergybased}. Parallel work by \cite{haarnoja2018composable} showed that logical conjunction ($\land$) between tasks can be approximated by averaging soft value functions. These compositional mechanisms were further rigorously analyzed by \cite{hunt2019composingentropicpoliciesusing}, who identified theoretical limitations in soft Q-based composition and proposed corrections to ensure reliability.

The Boolean Task Algebra \citep{tasse2020bta} formalizes these binary notions into a cohesive framework, utilizing extended Q-value functions that encode all goal-reaching behaviors to support arbitrary Boolean compositions: disjunction ($\lor$), conjunction ($\land$) and negation ($\neg$).
\cite{tasse2022generalisation} later pointed out the specific structure of optimal value functions but did not analyze further simplifications in the composition process that derive from these.
The authors show that Boolean composition of learned skills enables efficient lifelong transfer and task-space generalization, but in stochastic environments it provides bounded near-optimality guarantees rather than exact compositional optimality.
\cite{tasse2024skillmachinestemporallogic} incorporate these simplifications into the Skill Machines framework, which extends the BTA from purely logical task composition to temporal-logic specifications by using reward-machine-guided \citep{Toro_Icarte_2022rewardmachines} `Skill Machines' that sequence Boolean-composed skills to solve complex long-horizon tasks in a zero-shot manner.

\section{Framework of the Boolean Task Algebra}\label{sec:framework_bta}

\paragraph{Task definition and assumptions.}
Following \cite{tasse2020bta}, we model a task as a Markov Decision Process (MDP),
$
M = (\mathcal{S}, \mathcal{A}, \rho, r)
$,
where $\mathcal{S}$ is the state space, $\mathcal{A}$ is the action space, $\rho(s,a)$ the transition kernel, and $r(s, a)$ the reward function.
We only consider undiscounted MDPs with deterministic transition dynamics and an absorbing set $\mathcal{G} \subseteq \mathcal{S}$, whose elements constitute the goal states of any given task.
We define $\rho, \mathcal{S}$ and $\mathcal{A}$ and assume them fixed for all tasks.
Now, the set of tasks of our interest is defined as
\begin{align}\label{def:task}
\mathcal{M}
= \big\{
    (\mathcal{S}, \mathcal{A}, \rho, r)
    \;\big|\;
    r : \mathcal{S} \times \mathcal{A} \to \R
    \text{ such that }
    &r(s,a) = r_{s,a}
    &&\forall s \notin \mathcal{G}, a \in \mathcal{A} \\
    \text{and } &r(g,a) \in \{ r_{\varnothing}, r_{\mathcal{U}} \}
    &&\forall g \in \mathcal{G}, a \in \mathcal{A}
    \big\}.\notag
\end{align}
Here, $r_{s,a}$ are fixed (task-independent) non-terminal rewards,
and $r_{\varnothing} \le r_{\mathcal{U}}$ denote the only two possible terminal rewards, hence inducing the Boolean nature of the formulation.

\paragraph{Extended Q-value functions.}
Standard value functions are insufficient to compose tasks under the Boolean framework, since they only represent the expected return with respect to the nearest goal \citep{tasse2020bta}. 
To enable reasoning about all possible goals, the extended reward function
$\bar{r} : \mathcal{S} \times \mathcal{G} \times \mathcal{A} \to \R$ is defined as
$
\bar{r}(s,g,a) = r(s,a)
\text{ if } s \notin \mathcal{G} \lor s = g,
\text{ and }
\bar{r}(s,g,a) = \bar{r}_{\min}
\text{ if } s \in \mathcal{G} \setminus \{g\},
$.
Here $\bar{r}_{\min}$ is a large negative penalty ensuring that reaching an undesired goal yields the worst possible return.
The corresponding extended Q-value function is then given by
\[
\bar{Q}^\pi (s, g, a) = \bar{r}(s, g, a) + \int_{\mathcal{S}} \bar{V}^{\pi}(s', g)\,\rho_{(s,a)}(ds'),
\tag{2}
\]
where
$
\bar{V}^{\pi}(s, g) = \mathbb{E}_{\pi}\!\left[\sum_{t=0}^{\infty} \bar{r}(s_t, g, a_t)\right],
$
with optimal counterpart $\bar{Q}^* = \max_\pi \bar{Q}^\pi$.
Intuitively, $\bar{Q}^*(s,g,a)$ measures the value of taking action $a$ in state $s$ when pursuing goal $g$, thus encoding how to reach every goal in the environment.
The standard optimal Q-function for a task $M$ can be recovered as
$Q^*_M(s,a) = \max_{g \in \mathcal{G}} \bar{Q}^*_M(s,g,a)$,
establishing $\bar{Q}^*$ as a richer representation that enables the zero-shot logical composition of tasks.

\paragraph{Boolean algebras of tasks and value functions.}
We endow the set of tasks $\mathcal{M}$ with a Boolean algebraic structure. 
We first define two special tasks: the \emph{universal task} (always maximally rewarding),
$
\mathcal{M}_{\mathcal{U}} = (S, A, \rho, r_{\mathcal{U}}(s,a) = \max_{M \in \mathcal{M}} r_{M}(s,a))
$
and the \emph{empty task} (always minimally rewarding),
$
\mathcal{M}_{\varnothing} = (S, A, \rho, r_{\varnothing}(s,a) = \min_{M \in \mathcal{M}} r_{M}(s,a))
$.
Their corresponding optimal extended value functions, $\bar{Q}^*_{\mathcal{U}}$ and $\bar{Q}^*_{\varnothing}$, represent the maximal and minimal achievable returns.

\smallskip
For any pair of tasks $M_1, M_2 \in \mathcal{M}$, the Boolean operators are defined directly on their reward functions, given that the rest of the conforming elements of the MDP are fixed. Analogously, composition rules for optimal extended Q-value functions are also defined:
\begin{align*}
\text{(Negation $\neg$)} \quad 
&r_{\lnot M}(s,a)
= \big(r_{\mathcal{U}}(s,a) + r_{\varnothing}(s,a)\big) - r_{M}(s,a),\\
&\bar{Q}^*_{\lnot M}(s,g,a)
= \big(\bar{Q}^*_{\mathcal{U}}(s,g,a) + \bar{Q}^*_{\varnothing}(s,g,a)\big)
- \bar{Q}^*_{M}(s,g,a), \\[6pt]
\text{(Disjunction $\lor$)} \quad
&r_{M_1 \lor M_2}(s,a)
= \max\{r_{M_1}(s,a),\, r_{M_2}(s,a)\},\\
&\bar{Q}^*_{M_1 \lor M_2}(s,g,a)
= \max\{\bar{Q}^*_{M_1}(s,g,a),\, \bar{Q}^*_{M_2}(s,g,a)\}, \\[6pt]
\text{(Conjunction $\land$)} \quad
&r_{M_1 \land M_2}(s,a)
= \min\{r_{M_1}(s,a),\, r_{M_2}(s,a)\},\\
&\bar{Q}^*_{M_1 \land M_2}(s,g,a)
= \min\{\bar{Q}^*_{M_1}(s,g,a),\, \bar{Q}^*_{M_2}(s,g,a)\}.
\end{align*}

With these operations and identity elements 
$\mathcal{M}_{\mathcal{U}}$ and $\mathcal{M}_{\varnothing}$ 
for tasks, and 
$\bar{Q}^*_{\mathcal{U}}$ and $\bar{Q}^*_{\varnothing}$ for value functions, \cite{tasse2020bta} show in Theorems 1 and 2 that the tuples
$
(\mathcal{M}, \lor, \land, \lnot, \mathcal{M}_{\mathcal{U}}, \mathcal{M}_{\varnothing})
\quad \text{and} \quad
(\bar{Q}^*, \lor, \land, \lnot, \bar{Q}^*_{\mathcal{U}}, \bar{Q}^*_{\varnothing})
$
form the Boolean Task Algebra and the Boolean value function algebra, respectively.

\paragraph{Homomorphism between algebras.}
A central contribution of \cite{tasse2020bta} is establishing a homomorphism between the algebra of tasks and value functions.
That is, the following constitutes a homomorphism,
$
\mathcal{F} : \mathcal{M} \longrightarrow \bar{Q}^*
$
with
$
\mathcal{F} (M) = \bar{Q}^*_{M},
$
where
$
\bar{Q}^* = \{\bar{Q}^*_M | M \in \mathcal{M}\}
$ is the set of possible optimal extended Q-value functions.
Ultimately, this provides a way for composing value functions, where given $M_1, M_2 \in \mathcal{M}$,
\[
\bar{Q}^*_{\lnot M} = \lnot \bar{Q}_M^*, \quad
\bar{Q}^*_{M_1 \lor M_2} = \bar{Q}_{M_1}^* \lor \bar{Q}_{M_2}^*, \quad
\bar{Q}^*_{M_1 \land M_2} = \bar{Q}_{M_1}^* \land \bar{Q}_{M_2}^*.
\]
This result formally connects logical task composition to the algebraic composition of optimal extended value functions, implying that once the agent has learned $\bar{Q}^*$ for a small set of base tasks, it can derive the optimal policy for any Boolean combination of them without further learning.
The original formulation proves this result for a broader set of tasks that allow for more than two terminal rewards $r_\varnothing$, $r_\mathcal{U}$ but keep the other restrictions present in $\mathcal{M}$.

\paragraph{BTA in Skill Machines.}
Skill machines \citep{tasse2024skillmachinestemporallogic} extends the BTA to handle tasks specified in Linear Temporal Logic (LTL), a formal language that combines Boolean operators with temporal operators such as $\mathbf{F}$ (eventually) and $\mathbf{G}$ (always) to express ordered sequences of goals.
Let $\mathcal{S}$ denote the environment state space, $\mathcal{P}$ the set of propositional symbols representing high-level environment features, $L: \mathcal{S} \to 2^{\mathcal{P}}$ the labelling function assigning truth values to environment states, $G = 2^{\mathcal{P} \cup \mathcal{C}}$ the set of absorbing goal states, and $\sigma_u$ the Boolean expression over $\mathcal{P} \cup \mathcal{C}$ assigned to skill machine state $u$ via planning over the reward machine.
To support this, skill primitives $Q^*_p(\langle s, c \rangle, g, \langle a, a_\tau \rangle)$ are defined over an augmented state space $\mathcal{S}_G = (\mathcal{S} \times 2^\mathcal{C}) \cup 2^{\mathcal{P} \cup \mathcal{C}}$, where $\mathcal{C}$ tracks violated constraints and $a_\tau \in \{0,1\}$ is a termination action.
Since all primitives share the same state space and dynamics, the BTA operators apply directly to yield $Q^*_\sigma$ for any Boolean expression $\sigma$ over $\mathcal{P} \cup \mathcal{C}$.
A skill machine is then a finite state machine $\langle U, u_0, \delta_u, \delta_Q \rangle$ where each state $u \in U$ executes a spatially composed skill $\delta_Q(u) = \max_{g \in G} Q^*_{\sigma_u}(\langle s, c \rangle, g, \langle a, 0 \rangle)$ and transitions to $u' = \delta_u(u, L(s'))$ upon satisfying the required propositions, thereby chaining Boolean-composed skills sequentially to solve any LTL-specified task in a zero-shot manner.

\section{An alternative method for composition}
\label{section:extension_bta}
Having introduced the Boolean Task Algebra framework we now present a series of limitations regarding the representational capacity of this framework.

\cite{tasse2022generalisation} note a collapse in the optimal extended value functions under the BTA framework and state that the optimal extended Q-value function can only take two forms, that of the universal or empty value functions.
However, they do not formalize this phenomenon nor its implications for the framework.
We formalize this issue and further argue that, as a consequence, it is not necessary to learn the full set of base tasks defined in the BTA framework presented, see Section~\ref{sec:framework_bta}.
In particular, the assumptions restrict the space of possible tasks and optimal Q-value functions such that it suffices to learn the value functions of two tasks, regardless of the number of goals in $\mathcal{G}$.
Proposition~\ref{lemma1_cor1} embodies this limitation by showing that the extended Q-value function can be written as the sum of rewards accumulated until reaching a terminal state ($G^{*}_{s:g,a}$, which is shared across tasks) plus the reward for reaching the goal ($\bar{r}_{M}(s', g, a')$).
In Section~\ref{section:experiments}, we empirically investigate whether learning a logarithmic number of base tasks nevertheless improves performance sufficiently to justify the additional training cost.

\begin{restatable}{proposition}{propSharedG}\label{lemma1_cor1}
    \textit{(Corollary 1 from \cite{tasse2020bta})}
    Denote $G^{*}_{s:g,a}$ as the sum of rewards starting from $s$ and taking action $a$ up until, 
    but not including, $g$. Then let $M \in \mathcal{M}$ and $\bar{Q}^{*}_{M}$ be the extended 
    $Q$-value function. Then for all $s \in \mathcal{S}$, $g \in \mathcal{G}$, 
    $a \in \mathcal{A}$, there exists a $G^{*}_{s:g,a} \in \mathbb{R}$ such that
    \[
    \bar{Q}^{*}_{M}(s,g,a) 
    = G^{*}_{s:g,a} + \bar{r}_{M}(s', g, a'), 
    \quad \text{where } s' \in \mathcal{G} 
    \text{ and } a' = \arg\max_{b \in \mathcal{A}} \bar{r}_{M}(s', g, b).
    \]
\end{restatable}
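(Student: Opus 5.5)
The idea is to exploit determinism and the absorbing structure of goals. The optimal extended return from $(s,a)$ toward $g$ then splits into a prefix, made of task-independent non-terminal rewards collected before entering $\mathcal{G}$, and a single terminal reward collected at the goal. The task $M$ enters only through that terminal reward.

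\textbf{Step 1: fix $(s,g,a)$ and look at trajectories.} Since $\rho$ is deterministic, every policy $\pi$ together with the first action $a$ produces a unique trajectory $s=s_0,s_1,\dots$. Because $\mathcal{G}$ is absorbing and entering it ends the episode, the return $\bar{Q}^\pi_M(s,g,a)$ has one of three forms:
\begin{itemize}
\item if the trajectory never reaches $\mathcal{G}$, it is a sum of non-terminal rewards $r_{s_t,a_t}$;
\item if it first hits some $g'\neq g$, it is the prefix sum plus $\bar{r}_{\min}$;
\item if it first hits $g$, it is the prefix sum plus the terminal reward $\bar{r}_M(g,g,a')=r_M(g,a')$.
\end{itemize}
In every case the prefix consists only of the fixed rewards $r_{s,a}$ for $s\notin\mathcal{G}$, so it does not depend on $M$.

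\textbf{Step 2: argue the optimum ends at $g$.} This is the step I expect to be the main obstacle. We need the maximizing $\pi$ to be one that reaches $g$ (whenever $g$ is reachable), chosen independently of $M$. Two facts do the work:
\begin{itemize}
\item $\bar{r}_{\min}$ is taken small enough that reaching $g'\neq g$ is worse than any path ending at $g$. This is the defining role of $\bar{r}_{\min}$.
\item Non-termination must be dominated, which is the standard BTA assumption (inherited from \cite{tasse2020bta}) that optimal policies are proper / non-terminal rewards make wandering forever suboptimal.
\end{itemize}
Among the policies ending at $g$, the return is $\text{prefix}+r_M(g,a')$, and the terminal term is the same for all of them. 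So the maximizer of the total equals the maximizer of the prefix alone, and that choice is independent of $M$. Define
\[
G^*_{s:g,a}=\sup_{\pi\text{ reaching } g}\ \sum_{t<\tau_g} r_{s_t,a_t},
\]
where $\tau_g$ is the hitting time of $g$. This is a real number depending only on $(s,g,a)$ and the shared dynamics. Finiteness comes from the undiscounted-but-bounded-return assumption of the framework.

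\textbf{Step 3: the terminal step.} At $s'=g$ the agent chooses the terminal action to maximize $\bar{r}_M(g,g,b)$. This gives $a'=\arg\max_b \bar{r}_M(s',g,b)$, and the optimal value is exactly $G^*_{s:g,a}+\bar{r}_M(s',g,a')$.

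\textbf{Step 4: the unreachable case.} If $g$ is unreachable from $(s,a)$, I would note that \cite{tasse2020bta} treat this via the same convention, with $\bar{r}_{\min}$ absorbing the value. One can then set $G^*_{s:g,a}$ so that the identity still holds, taking $s'$ to be the goal state actually reached. Alternatively, one assumes reachability as in the original Corollary 1.

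Since this is Corollary~1 of \cite{tasse2020bta}, the cleanest path is to cite their Lemma~1 for Step~2 and verify only the decomposition.
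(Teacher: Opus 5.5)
Your sketch is correct under the framework's standing assumptions. There is, however, no proof in the paper to match it against: the statement is imported as Corollary~1 of \cite{tasse2020bta}, which is exactly where your closing sentence lands.

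The genuine difference is that your Step~2 establishes more than the literal statement. Read existentially, the claim is trivial: for any $s'\in\mathcal{G}$, set $G^*_{s:g,a}:=\bar{Q}^*_M(s,g,a)-\bar{r}_M(s',g,a')$. What the paper actually relies on downstream is two further facts: that $s'=g$ whenever $g$ is reachable, and that $G^*_{s:g,a}$ is the task-independent optimal prefix return. The paper obtains these separately. The first comes from the proof of Lemma~2 of \cite{tasse2020bta}; the second is a one-line assertion inside the proofs of Lemma~\ref{lemma:new_valuefunc} and Corollary~\ref{cor:q-value-empty-less-than-universal}.

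Your argument yields both facts in one place. It uses deterministic trajectories, a prefix built only from the fixed rewards $r_{s,a}$, and a terminal term $\max_b \bar{r}_M(g,g,b)$ that is constant across all paths ending at $g$, so the optimal prefix is selected independently of $M$. It also exposes the two hypotheses these facts rest on:
\begin{enumerate}
\item $\bar{r}_{\min}$ must be negative enough that any path ending at some $g'\neq g$ is worse than the best path to $g$, even when the terminal reward at $g$ is $r_\varnothing$. The paper only describes this informally.
\item Non-terminating behaviour must be dominated, via properness or bounded returns. The paper leaves this implicit.
\end{enumerate}
The citation is shorter, but your version is the one that actually justifies the task-independence of $G^*_{s:g,a}$ used later.
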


\smallskip
We assume the optimal Q-value functions for tasks $\universaltask$ and $\emptytask$, $\bar{Q}_{\mathcal{U}}^*$ and $\bar{Q}_{\varnothing}^*$, respectively, are known, and present an alternative way to obtain $\bar{Q}_M^*$ for any task $M$.

\begin{restatable}{definition}{definitionGoalsets}\label{def:goals_set}
    Let $M \in \mathcal{M}$ and define $\mathcal{G}^+_M = \{g \in \mathcal{G}: \bar{r}_{M}(g, g, a) = \universalreward \; \forall a \in \mathcal{A}\}$, i.e., $\mathcal{G}^+_M$ is the set of desired goals for task $M$. Denote $\mathcal{G}^+_i$ for the set of goals for task $M_i$.
\end{restatable}

\begin{restatable}{lemma}{llemaOptimal}\label{lemma:new_valuefunc}
(Proof in Appendix \ref{appendix_proofs})
Let $\mathcal{M}$ be the set of tasks defined in Equation \ref{def:task} and $\mathcal{G}^+_M$ like in Definition \ref{def:goals_set}. For any $g \in \mathcal{G}$ and $M \in \mathcal{M}$,
\[
\bar{Q}^*_M (s, g, a) =
\begin{cases}
    \bar{Q}_{\mathcal{U}}^*(s, g, a) \;\; \text{if $g \in \gplus_M$} &\forall (s, a) \in \mathcal{S} \times \mathcal{A},\\
    \bar{Q}_{\varnothing}^*(s, g, a) \;\; \text{if $g \in \mathcal{G} \setminus \gplus_M$} &\forall (s, a) \in \mathcal{S} \times \mathcal{A}.
\end{cases}
\]
In other words, if $g$ is a goal for task $M$, i.e. $g \in \gplus$, then the state-action slice for goal $g$ and task $M$ is that of the universal task for goal $g$. Similarly, if $g$ is \textbf{not} a goal for task $M$, i.e. $g \notin \gplus$, then it is the slice of the empty task.
\end{restatable}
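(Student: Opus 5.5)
The plan is to derive the lemma directly from Proposition~\ref{lemma1_cor1}, using that the non-goal term $G^{*}_{s:g,a}$ is shared by all tasks in $\mathcal{M}$. Fix $g \in \mathcal{G}$ and $(s,a)$. Proposition~\ref{lemma1_cor1} gives, for every $M \in \mathcal{M}$,
\[
\bar{Q}^*_M(s,g,a) = G^{*}_{s:g,a} + \bar{r}_M(s',g,a').
\]
First I will argue that $G^{*}_{s:g,a}$ really is task-independent. It collects only rewards $r_{s,a}$ at non-goal states together with the transitions $\rho$, and both are fixed across $\mathcal{M}$ by Equation~\ref{def:task}. In particular, the same expression holds for $M$, for $\universaltask$ and for $\emptytask$.

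Next I will identify the terminal term. Since $\bar{r}_{\min}$ is a large penalty at wrong goals, the optimal extended policy for target $g$ terminates at $s'=g$ whenever $g$ is reachable. The terminal term is then $\bar{r}_M(g,g,a') = r_M(g,a')$, and by Definition~\ref{def:goals_set} this equals $\universalreward$ if $g \in \gplus_M$ and $\emptyreward$ otherwise. Note that $r(g,\cdot)$ is constant in $a$ for $g\in\gplus_M$ by definition. For $g \notin \gplus_M$, the value is $\emptyreward$ for some action; since the maximum over $b$ is taken, I will read the task as assigning a single terminal reward per goal, or else note that the complement case means $r(g,b)=\emptyreward$ for all $b$.

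The universal task has terminal reward $\universalreward$ at every goal, so $\bar{Q}^*_{\mathcal{U}}(s,g,a) = G^{*}_{s:g,a} + \universalreward$. Likewise the empty task gives $\bar{Q}^*_{\varnothing}(s,g,a) = G^{*}_{s:g,a} + \emptyreward$. Matching the two cases yields the claim.

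The main obstacle is the case where $s'$ in Proposition~\ref{lemma1_cor1} might not equal $g$. This happens when $g$ is unreachable from $(s,a)$, or when ending at a wrong goal could beat paying $\bar{r}_{\min}$. In that case the terminal term is $\bar{r}_{\min}$ for every task, so all three functions coincide and the lemma holds trivially. I would handle this by a case split on whether the optimal trajectory ends at $g$. I would rely on $\bar{r}_{\min}$ being chosen small enough that reaching $g$ is always preferred when possible, which makes the optimal trajectory to $g$ identical across tasks. An alternative route is to check directly that $G^{*}_{s:g,a}+\bar r_M$ satisfies the extended Bellman optimality equation (2) in the deterministic setting and invoke uniqueness. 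I would keep this as a fallback if the corollary's $s'$ proves ambiguous.
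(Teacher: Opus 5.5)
Your proposal is correct and follows essentially the same route as the paper. Both use the Corollary~1 decomposition with a task-independent $G^{*}_{s:g,a}$, set $s'=g$ for reachable $g$, match the terminal reward $r_\mathcal{U}$ or $r_\varnothing$ against the universal or empty task, and treat the unreachable case separately, where all three slices coincide (the paper argues this via uniqueness of the Bellman optimality solution).

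Your worry about action-dependent terminal rewards is well founded. The paper silently uses $r_M(g,b)=r_\varnothing$ for all $b$ whenever $g\notin\mathcal{G}^+_M$, but this does not follow from the definition of $\mathcal{G}^+_M$, so your ``or else note'' alternative is not a valid deduction. Indeed, the lemma fails for mixed terminal rewards at a reachable goal. The one-terminal-reward-per-goal reading you adopt is exactly the implicit assumption both proofs need, and it should be stated as an assumption.
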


Lemma \ref{lemma:new_valuefunc} states that for any $g$, the $\bar{Q}^*_M(\cdot, g, \cdot)$ slice is not arbitrary; it is either the $\bar{Q}^*_{\mathcal{U}}(\cdot, g, \cdot)$ slice or the $\bar{Q}^*_{\varnothing}(\cdot, g, \cdot)$ slice.
We can obtain the optimal extended Q-value function for any given task in the BTA by independently selecting the state-action slices corresponding to the universal task if the goal is active for task $M$ and to the empty task otherwise.
It then follows that it is not necessary to learn $2 + \lceil \log_2 (\mathcal{|G|}) \rceil$ base tasks to obtain composition because we would be learning duplicate information.
Figure \ref{fig:redundant_learning} portrays this idea by conceptually representing the optimal value functions as a 3D tensor for two tasks, $M_1$ and $M_2$, that pursue goals $\gplus_{M_1} = \{g_2, g_3\}$ and  $\gplus_{M_2  } = \{g_1, g_2\}$, respectively.
In particular, both tasks share a goal $g_2$ and hence the optimal state-action slice corresponding to this goal coincides. Moreover, the slices for shared non-goals, i.e. elements of $\mathcal{G}$ that neither $M_1$ nor $M_2$ pursue, like $g_4$ are also shared across tasks.

\begin{figure}
    \centering
    \includegraphics[width=0.55\linewidth]{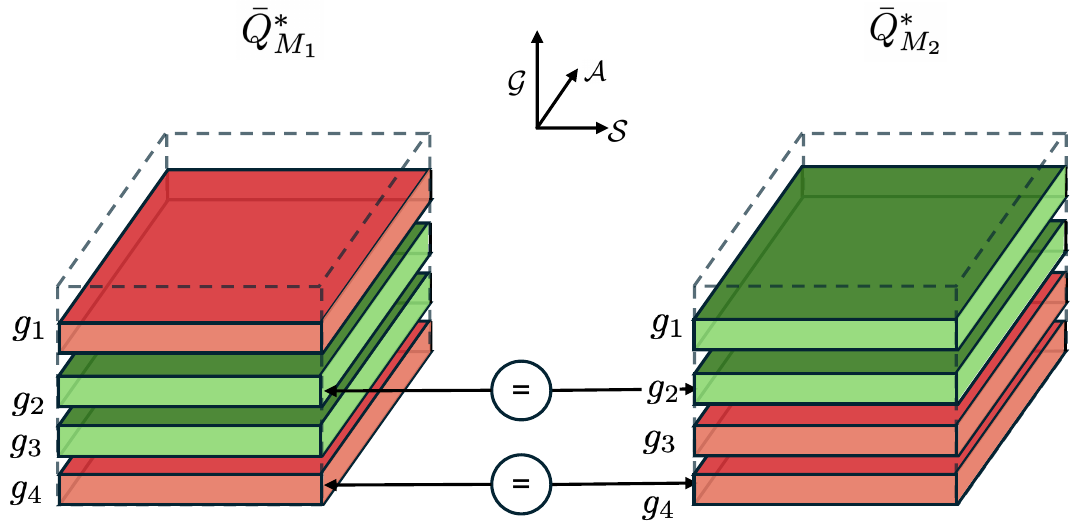}
    \caption{
        Optimal extended Q-value functions for two tasks $M_1$ and $M_2$ that pursue goals $\{g_2, g_3\}$ and $\{g_1, g_2\}$, respectively.
        Pursued goal slices are colored green and non-pursued red.
    }
    \label{fig:redundant_learning}
\end{figure}

It is important to note that we decide to learn the universal and empty tasks because they constitute an intuitive separation of the goal space but we can learn any two tasks $M_1, M_2 \in \mathcal{M}$ such that $\gplus_1 \uplus \gplus_2 = \mathcal{G}$\footnote{Here $\uplus$ denotes disjoint union.}, which means we learn a state-action slice for pursuing and not pursuing every goal.
In summary, any combination of tasks can be chosen as long as the agent learns the expected return of reaching the goal when it is desired and undesired in the task.

Under this new formulation, composition becomes straightforward since it suffices to obtain the set of desired goals for the composed task and construct the extended Q-value function using Lemma~\ref{lemma:new_valuefunc}.
We formalize this notion by defining, in Lemma \ref{lemma:homomorphism}, an isomorphism between the set of tasks, $\mathcal{M}$, and the power set of goals, $2^{\mathcal{G}}$, allowing us to express task composition via goal set composition in Theorem \ref{thm:composition_using_selection}.

\begin{restatable}{lemma}{llemaIsomorphism}\label{lemma:homomorphism}
    \textit{(Proof in Appendix \ref{appendix_proofs})}
    Let $\mathcal{F}: \mathcal{M} \to \mathcal{P}(\mathcal{G})$ be the map from $\mathcal{M}$ to $\mathcal{P}(\mathcal{G})$ such that $\mathcal{F}(M) = \mathcal{G}^+_M$.
    Then $\mathcal{F}$ is an isomorphism between the task Boolean algebra
    $(\mathcal{M}, \lor, \land, \neg, \mathcal{M}_\mathcal{U}, \mathcal{M}_\varnothing)$
    and the power set algebra for goals $(\mathcal{P}(\mathcal{G}), \cup, \cap, \: ', \mathcal{G}, \varnothing)$.
\end{restatable}

\begin{restatable}{theorem}{thmOne}
\label{thm:composition_using_selection}
    \textit{(Proof in Appendix \ref{appendix_proofs})}
    Let $M_1, M_2 \in \mathcal{M}$ be two tasks, and $\gplus_{M_1}, \gplus_{M_2}$ the respective sets of desired goals.
    Then,
    \begin{enumerate}
        \item[(i)]
        $
        \;\; \bar{Q}^*_{\neg M_1} (s, g, a) =
        \begin{cases}
        \bar{Q}^*_{\mathcal{\mathcal{U}}} (s, g, a)
        \;\; \forall (s, a)
        \quad \text{if $g \in \mathcal{G} \setminus \gplus_{M_1}$}, \\
        \bar{Q}^*_{\mathcal{\varnothing}} (s, g, a)
        \;\; \forall (s, a)
        \quad \text{if $g \in \gplus_{M_1}$}.
        \end{cases}
        $
        
        \item[(ii)]
        $
        \;\; \bar{Q}^*_{M_1 \lor M_2} (s, g, a) =
        \begin{cases}
        \bar{Q}^*_{\mathcal{\mathcal{U}}} (s, g, a)
        \;\; \forall (s, a)
        \quad \text{if $g \in \gplus_{M_1} \cup \gplus_{M_2}$}, \\
        \bar{Q}^*_{\mathcal{\varnothing}} (s, g, a)
        \;\; \forall (s, a)
        \quad \text{if $g \in \mathcal{G} \setminus (\gplus_{M_1} \cup \gplus_{M_2})$}.
        \end{cases}
        $

         \item[(iii)]
        $
        \;\; \bar{Q}^*_{M_1 \land M_2} (s, g, a) =
        \begin{cases}
        \bar{Q}^*_{\mathcal{\mathcal{U}}} (s, g, a)
        \;\; \forall (s, a)
        \quad \text{if $g \in \gplus_{M_1} \cap \gplus_{M_2}$}, \\
        \bar{Q}^*_{\mathcal{\varnothing}} (s, g, a)
        \;\; \forall (s, a)
        \quad \text{if $g \in \mathcal{G} \setminus (\gplus_{M_1} \cap \gplus_{M_2})$}.
        \end{cases}
        $
    \end{enumerate}
\end{restatable}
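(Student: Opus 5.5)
The plan is to reduce each clause to a statement about goal sets and then read off the value function from Lemma~\ref{lemma:new_valuefunc}. The composed tasks $\neg M_1$, $M_1\lor M_2$ and $M_1\land M_2$ lie in $\mathcal{M}$, since $(\mathcal{M},\lor,\land,\neg,\universaltask,\emptytask)$ is a Boolean algebra and hence closed under these operations. So Lemma~\ref{lemma:new_valuefunc} applies to each of them directly. For any composed task $N$, the slice $\bar{Q}^*_N(\cdot,g,\cdot)$ equals $\bar{Q}^*_{\mathcal{U}}(\cdot,g,\cdot)$ when $g\in\gplus_N$, and equals $\bar{Q}^*_{\varnothing}(\cdot,g,\cdot)$ otherwise. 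It therefore suffices to identify $\gplus_N$ in each case.

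For this identification I would invoke Lemma~\ref{lemma:homomorphism}. Since $\mathcal{F}(M)=\gplus_M$ is a Boolean algebra isomorphism, it preserves the operations:
\[
\gplus_{\neg M_1}=\mathcal{G}\setminus\gplus_{M_1},\qquad
\gplus_{M_1\lor M_2}=\gplus_{M_1}\cup\gplus_{M_2},\qquad
\gplus_{M_1\land M_2}=\gplus_{M_1}\cap\gplus_{M_2}.
\]
Substituting these sets into Lemma~\ref{lemma:new_valuefunc} gives (i), (ii) and (iii) verbatim. In (i) the roles flip: goals outside $\gplus_{M_1}$ take the universal slice.

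As a sanity check that does not depend on Lemma~\ref{lemma:homomorphism}, I would also verify the goal-set identities directly from the reward definitions at a goal $g$.
\begin{itemize}
\item \textbf{Negation.} $r_{\neg M}(g,a)=r_{\mathcal{U}}+r_\varnothing-r_M(g,a)$. This swaps $\universalreward$ and $\emptyreward$.
\item \textbf{Disjunction.} $\max\{r_{M_1},r_{M_2}\}(g,a)=\universalreward$ if and only if at least one of the two equals $\universalreward$.
\item \textbf{Conjunction.} $\min\{r_{M_1},r_{M_2}\}(g,a)=\universalreward$ if and only if both equal $\universalreward$, using $\emptyreward\le\universalreward$.
\end{itemize}

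The main obstacle is that these checks concern $r$, while Definition~\ref{def:goals_set} is phrased via $\bar{r}_M(g,g,a)$. The extended reward coincides with $r$ at $s=g$, so the two agree. There is also the degenerate case $\emptyreward=\universalreward$, where every goal lies in $\gplus$ and negation's complement identity fails. In that case, however, $\bar{Q}^*_{\mathcal{U}}=\bar{Q}^*_{\varnothing}$, so the stated formulas still hold trivially; I would note this separately.

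An alternative route would combine the homomorphism $\bar{Q}^*_{M_1\lor M_2}=\max\{\bar{Q}^*_{M_1},\bar{Q}^*_{M_2}\}$ slicewise with Lemma~\ref{lemma:new_valuefunc} and the pointwise ordering $\bar{Q}^*_{\varnothing}\le\bar{Q}^*_{\mathcal{U}}$, which follows from monotonicity of optimal values in the reward. That route also works, but the goal-set route avoids needing the ordering.
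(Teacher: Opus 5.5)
Your proposal is correct and follows the paper's proof: apply Lemma~\ref{lemma:new_valuefunc} to each composed task, then use the isomorphism of Lemma~\ref{lemma:homomorphism} to identify its desired-goal set as $\mathcal{G}\setminus\gplus_{M_1}$, $\gplus_{M_1}\cup\gplus_{M_2}$, or $\gplus_{M_1}\cap\gplus_{M_2}$. You also make explicit two points the paper leaves implicit: that $\neg M_1$, $M_1\lor M_2$ and $M_1\land M_2$ lie in $\mathcal{M}$, so the lemma applies to them, and that in the degenerate case $\emptyreward=\universalreward$ the complement identity fails, yet the statement still holds because $\bar{Q}^*_{\mathcal{U}}=\bar{Q}^*_{\varnothing}$.
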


Theorem \ref{thm:composition_using_selection} presents an alternative way to obtain task composition, characterized by the set of desired goals for each task.
Furthermore it brings a series of advantages with respect to the original method of composition presented in the paper.

\paragraph{Computational cost.} The formulation presented in Theorem~\ref{thm:composition_using_selection} offers computational advantages over the original BTA composition method, both in learning and composition.
The original BTA framework requires learning $\lceil \log_2 |\mathcal{G}| \rceil$ base tasks, in addition to $\universaltask$ and $\emptytask$, whereas in our new formulation it suffices to learn only two tasks: $\universaltask$ and $\emptytask$.
This reduces the number of parameters required for \textit{learning} from $\mathcal{O}\left(\lceil \log_2 |\mathcal{G}| \rceil \times |\mathcal{G}| \times |\mathcal{S}| \times |\mathcal{A}|\right)$ to $\mathcal{O}\left(|\mathcal{G}| \times |\mathcal{S}| \times |\mathcal{A}|\right)$.
The \textit{cost of composition} is also reduced by instead of performing a sequence of $\lceil \log_2 |\mathcal{G}| \rceil$ element-wise operations, with worst-case cost $\mathcal{O}\left(\lceil \log_2 |\mathcal{G}| \rceil \times |\mathcal{G}| \times |\mathcal{S}| \times |\mathcal{A}|\right)$, performing set operations on the desired goals and memory access/copying of the corresponding per-goal state-action slices, with cost $\mathcal{O}\left(|\mathcal{G}| \times |\mathcal{S}| \times |\mathcal{A}|\right)$.
Corollary~\ref{cor:q-value-empty-less-than-universal} provides the key insight, i.e., leveraging the order between task slices, the computations over the state-action dimensions can be done in $\mathcal{O}(1)$ time by simply choosing the bigger slice.

\begin{restatable}{corollary}{corollaryOne} \textit{(Proof in Appendix \ref{appendix_proofs})} \label{cor:q-value-empty-less-than-universal}
    The universal slice is uniformly greater than the empty slice for all goals, states and actions.
    $
    \bar{Q}^*_{\varnothing} (s, g, a) \le \bar{Q}^*_{\mathcal{U}} (s, g, a)
    \;\; \forall (s, g, a) \in \mathcal{S} \times \mathcal{G} \times \mathcal{A}.
    $
\end{restatable}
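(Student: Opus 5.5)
The plan is to read the inequality off Proposition~\ref{lemma1_cor1}, applied once to $\universaltask$ and once to $\emptytask$. The point is that the prefix return $G^{*}_{s:g,a}$, the return accumulated before reaching $g$, does not depend on the task. The reason is that all tasks in $\mathcal{M}$ share $\mathcal{S}$, $\mathcal{A}$, $\rho$ and the non-terminal rewards $r_{s,a}$. The extended reward on goals other than $g$ is also task-independent, since it equals $\bar{r}_{\min}$ everywhere. So, for a fixed $(s,g,a)$, the only task-dependent part is the terminal reward at $g$ itself.

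Concretely, fix $(s,g,a)\in\mathcal{S}\times\mathcal{G}\times\mathcal{A}$. Proposition~\ref{lemma1_cor1} gives
\[
\bar{Q}^*_{\mathcal{U}}(s,g,a)=G^{*}_{s:g,a}+\bar{r}_{\mathcal{U}}(g,g,a'),\qquad
\bar{Q}^*_{\varnothing}(s,g,a)=G^{*}_{s:g,a}+\bar{r}_{\varnothing}(g,g,a'').
\]
Here I take the terminal state $s'$ to be $g$. This is legitimate because, by the definition of $\bar{r}$, the only goal state whose reward is not $\bar{r}_{\min}$ is $g$. By the definition of the two tasks, $\bar{r}_{\mathcal{U}}(g,g,\cdot)\equiv \universalreward$ and $\bar{r}_{\varnothing}(g,g,\cdot)\equiv \emptyreward$. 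Subtracting the two identities, the prefix terms cancel and we get $\bar{Q}^*_{\mathcal{U}}(s,g,a)-\bar{Q}^*_{\varnothing}(s,g,a)=\universalreward-\emptyreward\ge 0$, using the standing assumption $\emptyreward\le\universalreward$.

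The main obstacle is justifying that $G^{*}_{s:g,a}$ really is the same number in both expansions. As literally stated, Proposition~\ref{lemma1_cor1} only asserts existence for each task $M$. I would argue as follows. In an undiscounted deterministic MDP, an optimal extended policy for goal $g$ maximises (prefix return) $+$ (terminal reward at $g$) over trajectories that enter $g$. The terminal term is a task-dependent constant that does not depend on the trajectory, so the maximising trajectory and its prefix return are the same for every $M$.

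For a cleaner route that avoids this subtlety, I would instead argue by monotonicity. Since $r_{\varnothing}(s,a)\le r_{\mathcal{U}}(s,a)$ pointwise (equality off $\mathcal{G}$, $\emptyreward\le\universalreward$ on $\mathcal{G}$), we have $\bar{r}_{\varnothing}\le\bar{r}_{\mathcal{U}}$ pointwise. Hence $\bar{Q}^{\pi}_{\varnothing}\le\bar{Q}^{\pi}_{\mathcal{U}}$ for every policy $\pi$, and taking the maximum over $\pi$ preserves the inequality. This second argument needs only that the returns are well defined, which is already assumed in defining $\bar{Q}^*$, and it would be my fallback if the prefix-sharing step proves awkward.

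As a side remark, the identity can also be obtained as a consistency check against the BTA negation rule from Section~\ref{sec:framework_bta}, combined with Lemma~\ref{lemma:new_valuefunc}.
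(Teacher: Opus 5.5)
Your first route follows the paper's argument for the case where $g$ can be reached, but the step ``take $s'=g$'' is not justified in general. Proposition~\ref{lemma1_cor1} only says the optimal trajectory terminates at \emph{some} $s'\in\mathcal{G}$. The fact that $g$ is the only goal not penalised by $\bar{r}_{\min}$ forces $s'=g$ only when $g$ is actually reachable after taking $a$ in $s$. When it is not (the simplest instance is $s\in\mathcal{G}\setminus\{g\}$), the agent never visits $g$. Since $\bar{r}_{\varnothing}(\cdot,g,\cdot)$ and $\bar{r}_{\mathcal{U}}(\cdot,g,\cdot)$ differ only at the state $g$, the two values then coincide. So your claimed identity $\bar{Q}^*_{\mathcal{U}}(s,g,a)-\bar{Q}^*_{\varnothing}(s,g,a)=\universalreward-\emptyreward$ is false on those slices whenever $\emptyreward<\universalreward$. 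The inequality still holds there, but your first argument does not establish it. The paper handles this with an explicit case split: on unreachable slices it gets equality from Lemma~2 of Tasse et al., and it uses the shared-prefix decomposition only on reachable slices. Your reason for the prefix being task-independent (the terminal reward is a constant that does not depend on the trajectory) is fine, and more explicit than the paper's.

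Your monotonicity fallback, by contrast, is a complete and correct proof, and a genuinely different one. You have $\bar{r}_{\varnothing}\le\bar{r}_{\mathcal{U}}$ pointwise, and both tasks share the kernel $\rho$ and the termination set $\mathcal{G}$. So each fixed $\pi$ induces the same trajectory distribution in both tasks, with dominated rewards along every trajectory. Hence $\bar{Q}^\pi_{\varnothing}\le\bar{Q}^\pi_{\mathcal{U}}$, and taking the supremum over $\pi$ preserves the inequality. This argument needs no decomposition, no reachability split and no prefix-sharing step. It also never uses determinism, so it covers the stochastic setting of Proposition~\ref{prop:counterex} as well. What the paper's route buys in exchange is finer information: the gap is exactly $\universalreward-\emptyreward$ where $g$ is reachable and $0$ elsewhere, which is the structure underlying Lemma~\ref{lemma:new_valuefunc}. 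I would make the monotonicity argument your main proof.

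Drop the closing side remark. Combining the negation rule with Lemma~\ref{lemma:new_valuefunc} on a slice only yields the tautology $\bar{Q}^*_{\varnothing}=\bar{Q}^*_{\mathcal{U}}+\bar{Q}^*_{\varnothing}-\bar{Q}^*_{\mathcal{U}}$, which carries no order information.
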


\paragraph{What happens for stochastic MDPs?}
This work addresses the matter of how many value functions are needed to construct every optimal extended Q-value function in \emph{deterministic} MDPs.
We can also wonder how many value functions would be needed to construct every optimal extended Q-value function in \emph{stochastic} MDPs.
In the former case we know that whatever the goal set, at most we need the functions under $|\mathcal{G}|$ different policies to obtain the optimal policy through composition.
For a task with a given set of desired goals we can operate independently learned slices for each set of goals.
However, this does not hold in a stochastic MDP, because this independence between goals inside the desired goal set disappears.
Proposition \ref{prop:counterex} proves these points by showing that the nature of optimal policies in the stochastic setting is exponential with respect to the number of total goals.

\begin{restatable}{proposition}{propCounterex}\label{prop:counterex}(Proof in Appendix \ref{appendix_proofs}.)
There exist MDPs with stochastic transition dynamics and goal set $\mathcal{G}$ for which there will be a different optimal action for each of the $2^{|\mathcal{G}|}-1$ desired goal sets, $\mathcal{G}^+ \subseteq \mathcal{G}$, with $|\mathcal{G}^+| \ge 1$.
\end{restatable}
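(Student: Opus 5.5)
We prove the statement by an explicit construction: a single start state $s_0$ whose actions lead stochastically to the goals. Let $\mathcal{G}=\{g_1,\dots,g_n\}$ and let $s_0$ be the only non-goal state. For every nonempty $T\subseteq\mathcal{G}$ introduce one action $a_T$. Taking $a_T$ from $s_0$ sends the agent to each $g\in T$ with probability $1/|T|$ and to the goals outside $T$ with probability $0$. All goals are absorbing, and the non-terminal reward at $s_0$ is a constant $c$, identical for every action. This MDP has $2^{n}-1$ actions.

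The first step is to compute the relevant optimal values. Since the problem is stochastic, the composed behaviour for a desired set $\mathcal{G}^+$ is evaluated by the standard task value $Q^*_M(s_0,a)$: the expected terminal reward $r_{\mathcal{U}}$ or $r_\varnothing$ collected on arrival. Here $\mathcal{G}^+_M=\mathcal{G}^+$. Every action terminates in one step, so
\[
Q^*_M(s_0,a_T)=c+\frac{|T\cap\mathcal{G}^+|}{|T|}\,r_{\mathcal{U}}+\frac{|T\setminus\mathcal{G}^+|}{|T|}\,r_\varnothing .
\]
When $r_\varnothing<r_{\mathcal{U}}$ this value equals $c+r_\varnothing+(r_{\mathcal{U}}-r_\varnothing)\,|T\cap\mathcal{G}^+|/|T|$, which is maximized exactly when $T\subseteq\mathcal{G}^+$. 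Among such $T$ every choice attains value $1$ for the fraction, so ties must be broken.

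The main obstacle is that the naive construction yields ties: for $|\mathcal{G}^+|\ge 2$, the action $a_{\{g\}}$ with $g\in\mathcal{G}^+$ is as good as $a_{\mathcal{G}^+}$. We break the ties by adding a small bonus $\epsilon|T|$ to the non-terminal reward of $a_T$, so that $r(s_0,a_T)=c+\epsilon|T|$. These rewards remain task-independent, so the MDP is still in $\mathcal{M}$. Equivalently, one could give an escape-cost advantage to covering more goals.

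The value now becomes
\[
(r_{\mathcal{U}}-r_\varnothing)\,\frac{|T\cap\mathcal{G}^+|}{|T|}+\epsilon|T| .
\]
Two checks establish that $a_{\mathcal{G}^+}$ is the unique maximizer when $\epsilon<(r_{\mathcal{U}}-r_\varnothing)/n^2$.

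\emph{Case $T\not\subseteq\mathcal{G}^+$.} The fraction drops by at least $1/|T|\ge 1/n$, which costs at least $(r_{\mathcal{U}}-r_\varnothing)/n$. The bonus gain is at most $\epsilon n<(r_{\mathcal{U}}-r_\varnothing)/n$, so $a_T$ is strictly worse than any subset of $\mathcal{G}^+$.

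\emph{Case $T\subsetneq\mathcal{G}^+$.} The fraction is $1$ for both $T$ and $\mathcal{G}^+$, and the bonus strictly prefers $\mathcal{G}^+$.

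It follows that the map $\mathcal{G}^+\mapsto a_{\mathcal{G}^+}$ assigns a distinct unique optimal action to each of the $2^{n}-1$ nonempty desired goal sets, which proves the claim.

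We then connect this to extended values. Because every goal slice $\bar{Q}^*(\cdot,g,\cdot)$ of the extended value function is optimized by the deterministic singleton action $a_{\{g\}}$, no selection or composition of the $|\mathcal{G}|$ per-goal policies recovers the optimal $a_{\mathcal{G}^+}$ once $|\mathcal{G}^+|\ge 2$. This illustrates the loss of goal independence claimed in the paragraph preceding the proposition. Finally, we verify that the construction satisfies the conditions of the task set in Equation~\ref{def:task}: the goals are absorbing and the non-terminal rewards do not depend on the task.
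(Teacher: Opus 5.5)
Your proposal is correct and uses essentially the paper's construction: a single non-goal state $s_0$, one action per goal subset that moves uniformly onto that subset, and a task-independent step reward that grows with subset size. That reward makes $a_{\mathcal{G}^+}$ win the ties among subsets of $\mathcal{G}^+$, while the loss from covering an undesired goal still dominates. The differences are minor: the paper uses the cost $-0.5/|\tilde{\mathcal{G}}|^2$ with goal rewards $1$ and $0$ and splits into supersets, subsets and incomparable sets, whereas your linear bonus $\epsilon|T|$ with $\epsilon<(r_{\mathcal{U}}-r_\varnothing)/n^2$ handles every $T\not\subseteq\mathcal{G}^+$ in one bound and avoids the paper's ill-defined empty-set action.
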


Consequently, we argue that there can be no strategy for the stochastic case that, in each state, takes a maximum (or another type of convex combination) over Q-value functions of a `small' set of policies (linear or even polynomial in $|\mathcal{G}|$).
In the worst case, a different action is optimal for each possible goal set (of which there can be exponentially many), meaning that a convex combination would need to be taken over an exponential number of `primitive' Q-value functions\footnote{Note that there are special cases where such constructions are possible, see e.g. \cite{infante2022globally}.}.
In fact, \cite{tasse2022generalisation} extend the BTA framework to stochastic MDPs, but optimality of composed value functions no longer holds, and only approximate guarantees can be established.

\section{Experiments}
\label{section:experiments}
Thus far, we formalized and proposed a theoretical improvement in both the learning and composition costs of the BTA, respectively.
The aim of the experiments presented here is to compare our proposed method to that of the original BTA \citep{tasse2020bta} in order to confirm these improvements empirically.
Additionally, we compare the advantage of this new method with respect to that of Skill Machines \citep{tasse2024skillmachinestemporallogic}, where we aim to show improvements only in composition cost, since this framework already takes advantage of training improvements by training two extended value functions (Algorithm 1 in \citep{tasse2024skillmachinestemporallogic}).
In particular we study these methods in \textit{sub-optimal scenarios} with increasing number of training iterations and \textit{upon convergence}.

\subsection{Experimental setup}
\begin{wrapfigure}[10]{r}{0.25\textwidth}
    \vspace{-1.2em}
    \centering
    \includegraphics[width=\linewidth]{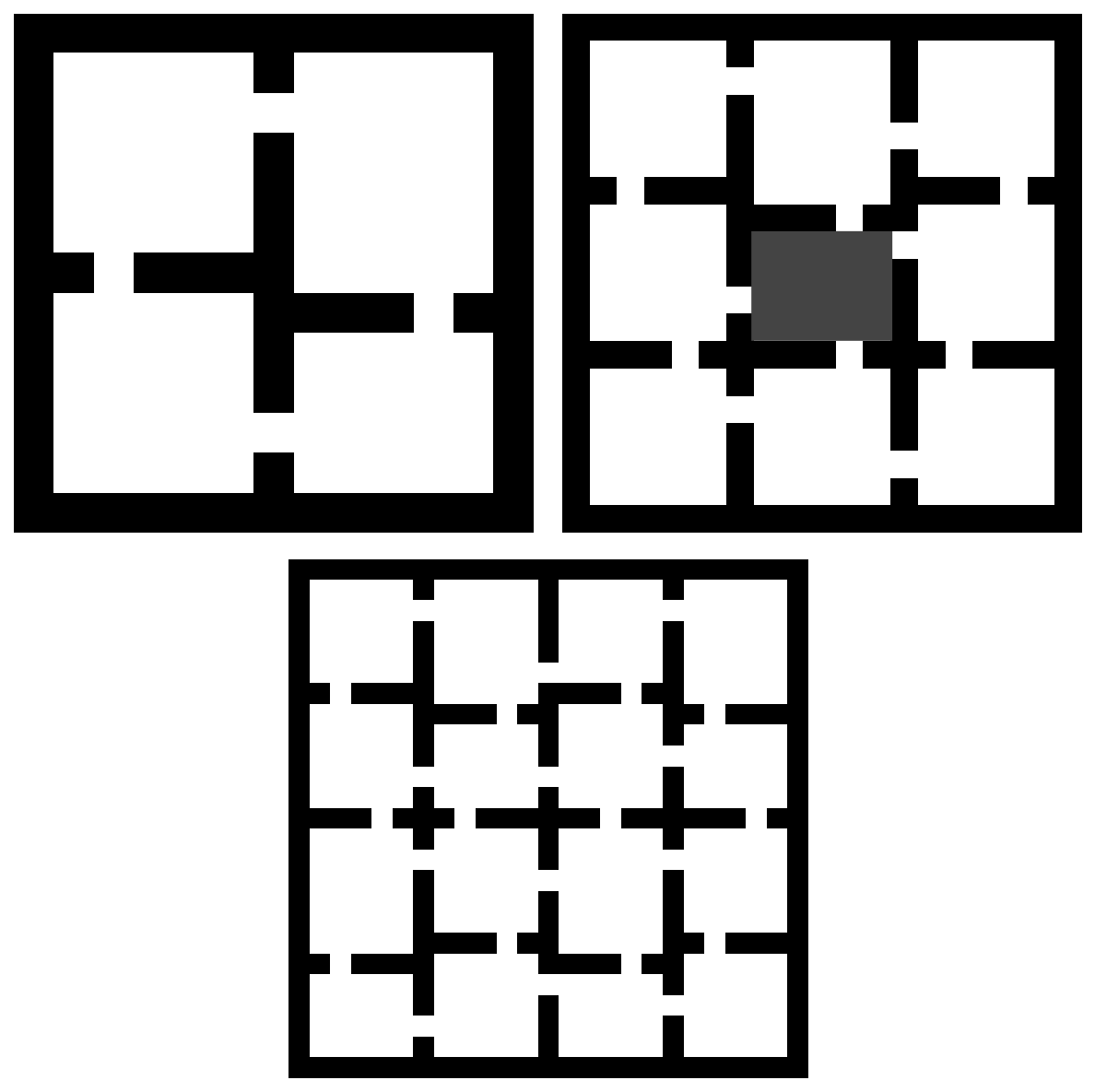}
    \label{fig:composition_time_compared}
\end{wrapfigure}
\paragraph{Rooms environment \citep{Sutton1998}.}
We extend the four rooms environment \citep{Sutton1998} to incorporate an exponentially growing number of rooms so we can place one goal per room and study the scaling law of both methods.
Consider a grid-world in which the agent navigates between rooms separated by walls with narrow doorways. The state space consists of all navigable grid cells and actions are the four cardinal movement directions plus a stay action.
We instantiate three variants: a $2\times2$ rooms layout with $|\mathcal{G}|=4$ goals (one per room), a $3\times3$ rooms layout with $|\mathcal{G}|=8$ goals (the middle room contains no goals), and a $4\times4$ rooms layout with $|\mathcal{G}|=16$ goals. We set $r_\mathcal{U}=2$, $r_\varnothing=-0.1$, a step reward of $-0.1$, and a discount factor of $1$.
Given the exponentially growing number of tasks, we randomly sample at most $5$ tasks for each goal set length for diversity and train the universal, empty, base tasks as well as all tasks separately to obtain their optimal policies.
All tasks are trained with the same number of maximum iterations, meaning that the base task method uses more compute since it requires learning more tasks.

\begin{wrapfigure}[9]{l}{0.24\textwidth}
    \vspace{-1.2em}
    \centering
    \includegraphics[width=\linewidth]{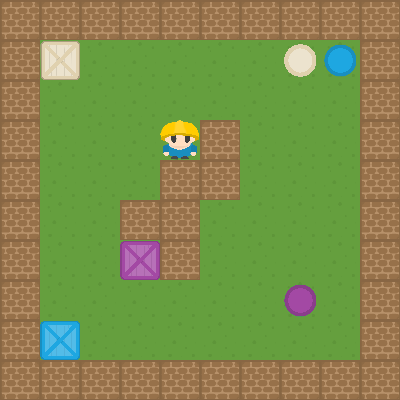}
\end{wrapfigure}
\paragraph{Boxman environment \citep{tasse2020bta}.}
A high-dimensional object collection environment with pixel-based observations of size $84 \times 84 \times 3$ (RGB).
The agent must collect objects defined by combinations of colors (beige, blue and purple) and shapes (squares and circles), which constitute skill primitives.
Tasks are learned using deep Q-learning with a CNN+MLP, where we train for 3 full runs and save intermediate checkpoints. We then take each checkpoint and evaluate in the environment, averaging over runs.
This allows us to understand how the learned policies converge under different methods.

\begin{wrapfigure}[7]{r}{0.24\textwidth}
    \vspace{-1.5em}
    \centering
    \includegraphics[width=\linewidth]{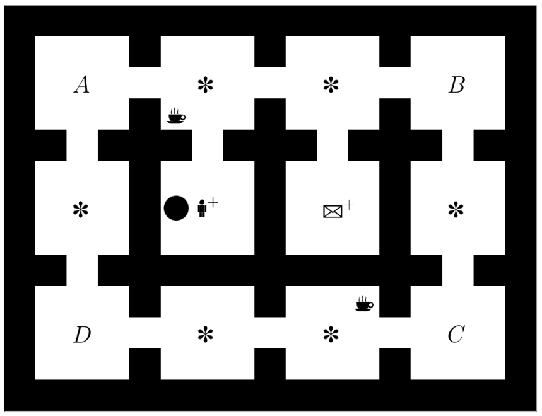}
\end{wrapfigure}
\paragraph{Office Gridworld \citep{Toro_Icarte_2022rewardmachines} for Skill Machines.}
A tabular grid-world containing labelled locations: rooms $A, B, C, D$, a coffee machine, a mail room, an office, and decoration cells. States are grid cells, actions are the four cardinal directions, and the goal set $\mathcal{G}$ consists of the labelled locations. Tasks are temporal logic specifications over these propositions, such as delivering coffee to the office before picking up mail or avoiding decoration cells.

\begin{wrapfigure}[8]{l}{0.22\textwidth}
    \vspace{-1.5em}
    \centering
    \includegraphics[width=\linewidth]{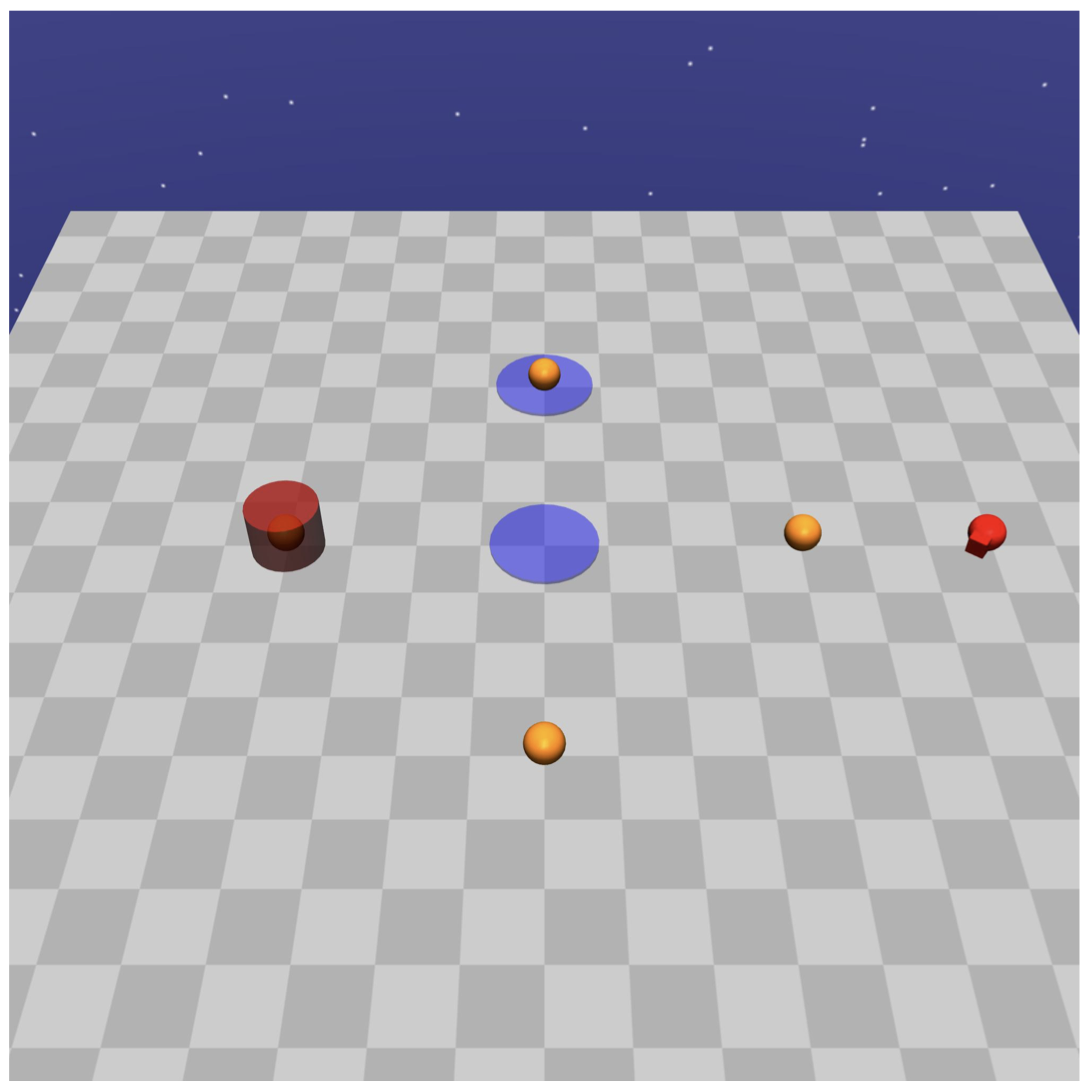}
\end{wrapfigure}
\textbf{Safety Gym Domain \citep{Achiam2019BenchmarkingSE} for Skill Machines.}
A continuous control environment in which a point-mass robot navigates to various regions using a continuous state space $\mathcal{S} = \mathbb{R}^{60}$ (comprising joint velocities, LiDAR readings, and other sensory inputs) and a continuous action space $\mathcal{A} = \mathbb{R}^{2}$ (direction and force). Tasks are defined over $|\mathcal{P}| = 3$ propositions corresponding to a red cylinder, green buttons, and blue regions, with a single constraint $\mathcal{C} = \{c\}$ tracking blue-region violations, yielding $|\mathcal{P} \cup \mathcal{C}| = 4$ skill primitives trained using TD3.

In both of these environments we measure successes in the LTL specification as well as total return.
We evaluate \textit{three methods} in this domain: the \textit{original} Skill Machines approach, which learns $2$ value functions and performs composition via Boolean operations (ultimately these are min, max, sum);
the goal-set based approach, which learns $2$ value functions and composes via slice selection; and the base tasks method, which learns more than $2$ tasks and composes via Boolean operations.

\subsection{Results.}
\paragraph{Rooms environment.}
Figure \ref{fig:convergence} shows the evolution of the average returns of the sampled tasks as we increase the number of training iterations.
We observe that returns given by the goal-set method are consistently higher than the base task approach.
With sufficient iterations, both methods learn the optimal policy, but composition through the universal and empty tasks converges to the optimal policy faster.
We hypothesize this behavior is due to the mixing of the value functions obtained via the $\max$ and $\min$ operations being detrimental.
In this sense, Corollary \ref{cor:q-value-empty-less-than-universal} is not guaranteed to hold until the Q-value functions converge, thus these operations might select an arbitrary number of parameters from one slice and the rest from the other.
In turn, full slice selection still maintains the relative ordering of slices, which we show is a fundamental property of the optimal value function.

\begin{figure}
    \centering
    \includegraphics[width=\linewidth]{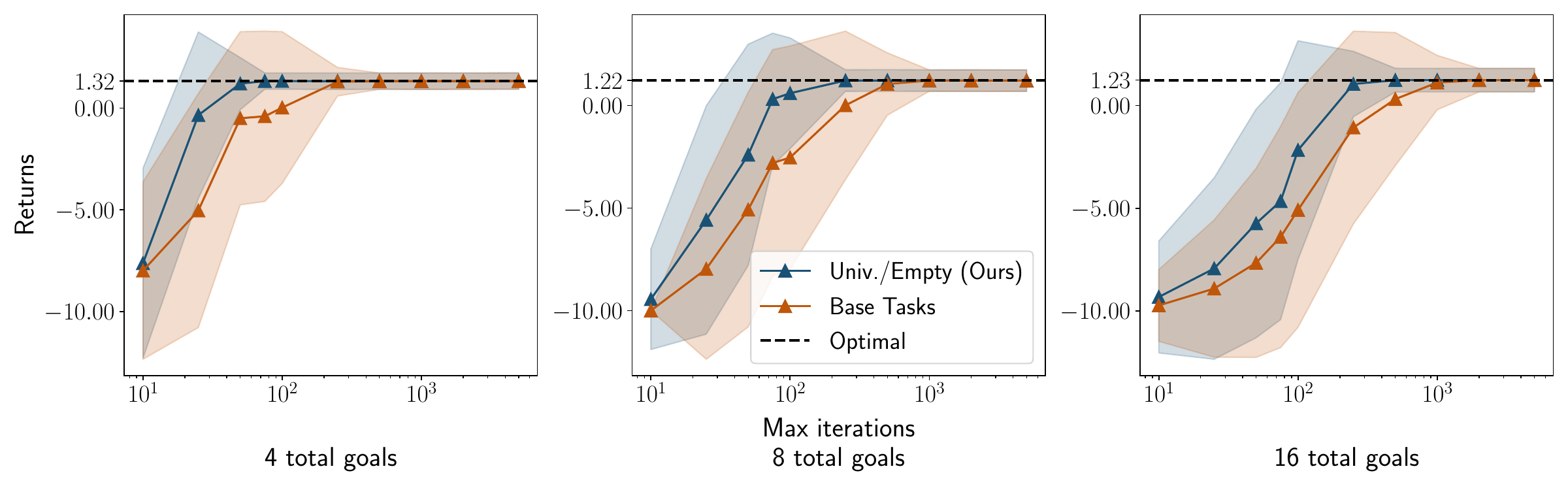}
    \caption{
        Evolution of the mean ($\pm$ std) return for each goal set length with the number of maximum iterations of the training algorithm.
        The returns for each task are measured on $1000$ episodes and averaged over $5$ tasks.
        The number of training iterations (x-axis) is per-value function.
    }
    \label{fig:convergence}
\end{figure}

\begin{wrapfigure}[11]{r}{0.45\textwidth}
    \vspace{-1em}
    \centering
    \vspace{-0.5em}
    \includegraphics[width=\linewidth]{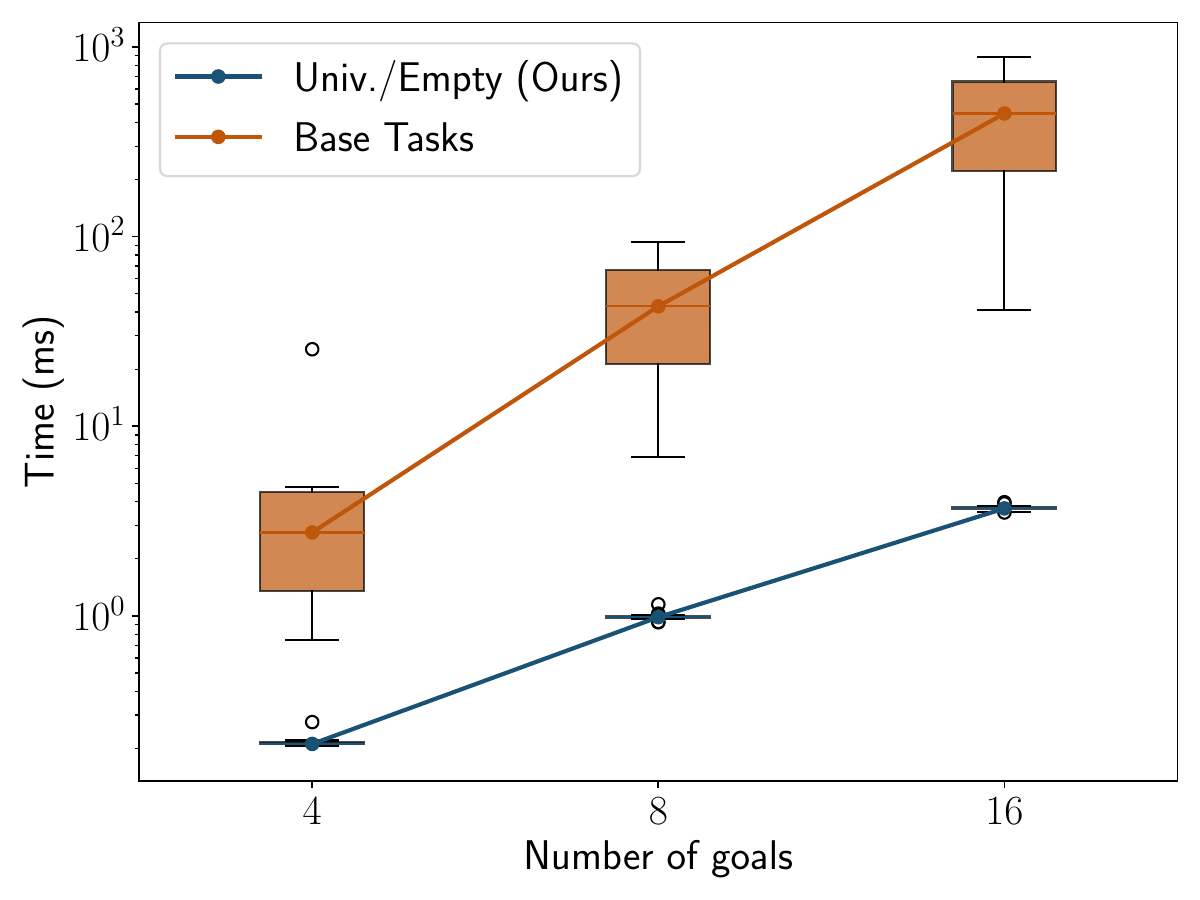}
    \label{fig:composition_time_compared}
\end{wrapfigure}
\textit{Temporal cost of composition.}
We measure the time taken for performing composition across all tasks using base tasks versus using only the universal and empty tasks in the goal-set based formulation.
The right figure shows that \textsc{Univ./Empty} consistently achieves lower composition times and exhibits substantially less variability than the \textsc{Base Tasks} approach across all problem sizes.
Moreover, the gap between the two methods widens as the number of goals increases. 
Note that the y-axis is in logarithmic scale.

\paragraph{Boxman environment.}
Figure \ref{fig:boxman_convergence} shows convergence of both methods in the Boxman 
environment, where we include two representations of the same training runs. 
Figure \ref{fig:boxman_sub1} shows the evolution of the average return as a function of the number of training iterations per UVFA. We observe that both methods reach a similar final return, but the original base task method converges to the same optimal policy with fewer iterations per value function.
However, this advantage disappears when we correct for the total number of training iterations per method. Figure \ref{fig:boxman_sub2} shows the same results as a function of the total number of training iterations summed across all UVFAs.
Since the base task method requires learning more value functions, its total training cost is higher, and it remains at a suboptimal return for a significantly longer total budget, whereas our proposed method reaches near-optimal performance earlier.
The figures also confirm that in a function approximation setting, learning additional base tasks does not translate to improved performance upon convergence.

\begin{figure}[htbp]
    \centering
    \begin{subfigure}[b]{0.45\textwidth}
        \centering
        \includegraphics[width=\textwidth]{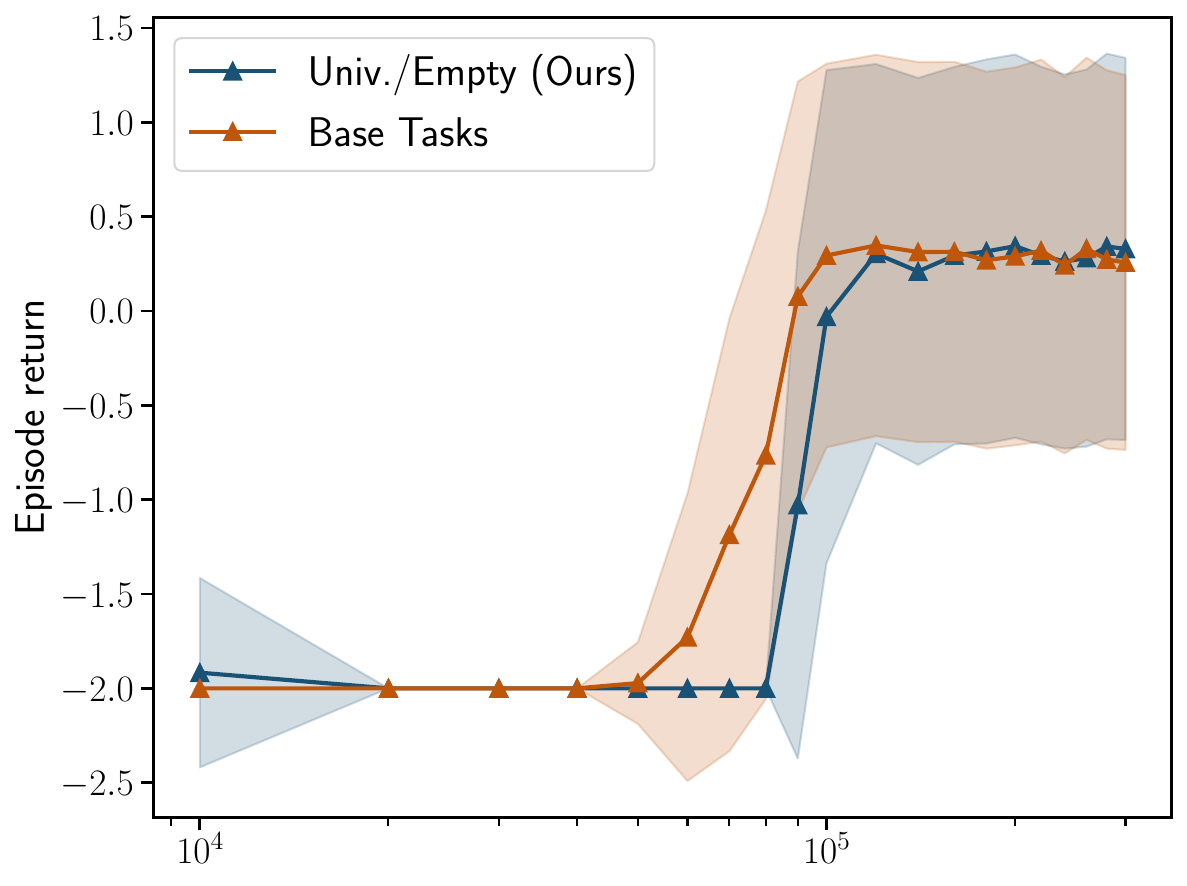}
        \caption{Comparison for a fixed per-UVFA budget.}
        \label{fig:boxman_sub1}
    \end{subfigure}
    \hfill
    \begin{subfigure}[b]{0.45\textwidth}
        \centering
        \includegraphics[width=\textwidth]{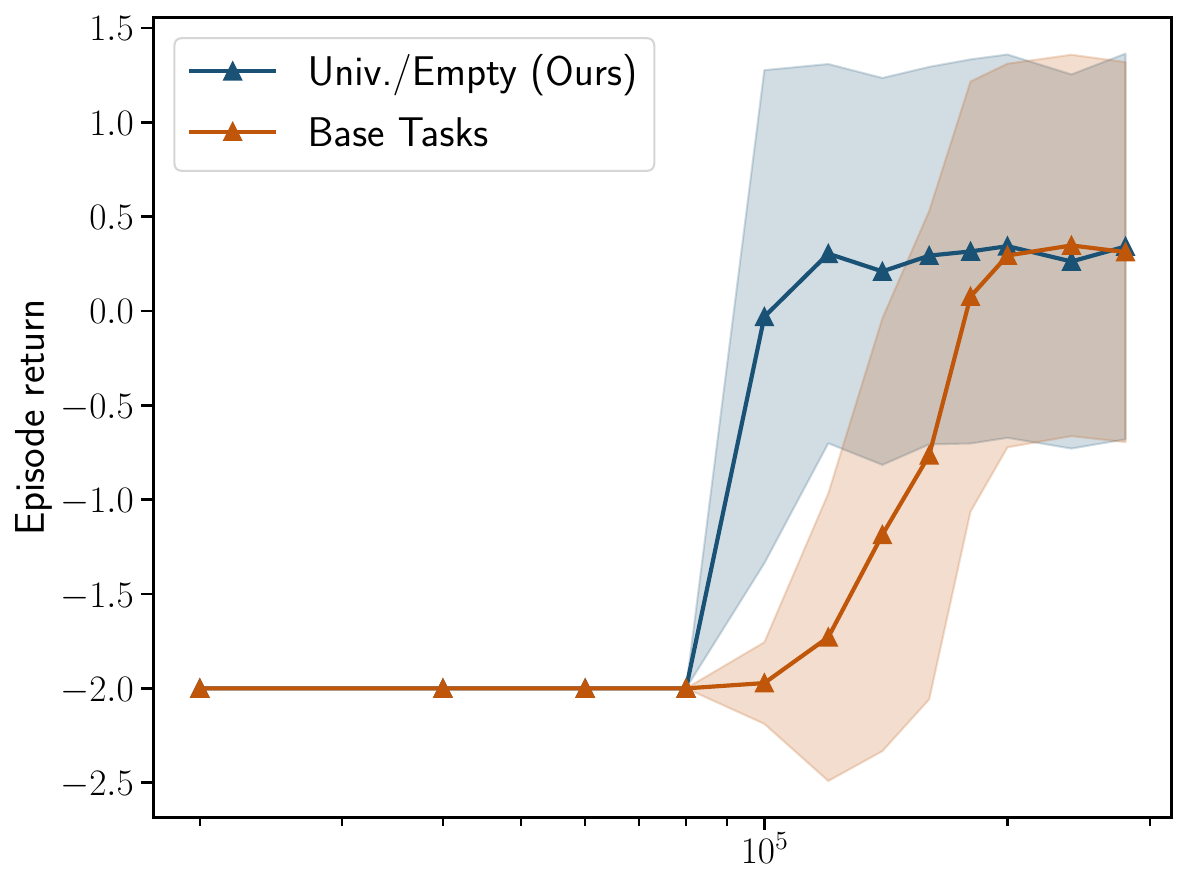}
        \caption{Comparison for a fixed total training budget.}
        \label{fig:boxman_sub2}
    \end{subfigure}
    
    \caption{Evolution of the average reward ($\pm$ std) across all tasks in the Boxman environment. The x-axis represents the number of training iterations.
    Results are averaged over tasks and $3$ runs.
    }
    \label{fig:boxman_convergence}
\end{figure}

\paragraph{Office Gridworld environment with Skill Machines.}
Figure \ref{fig:office:returns_successes} depicts the success rate and average episode return of the LTL specification for the average of the different tasks in the office environment.
We see that all methods converge almost in an equivalent manner, confirming that extra efforts to train base tasks yield no additional returns.
Figure \ref{fig:office:times} confirms that the goal-set based approach (Univ./Empty) has a lower composition cost, since it consists only of slice selection.
We further observe that the original and base task methods have similar cost given that their method for composition is the same.
See Appendix \ref{appendix:b:detailed-experimental-design} and \ref{appendix:b:office} for task explanations and per-task metrics.

\begin{figure}[htbp]
    \centering
    \begin{subfigure}[b]{0.58\textwidth}
        \centering
        \includegraphics[width=\textwidth]{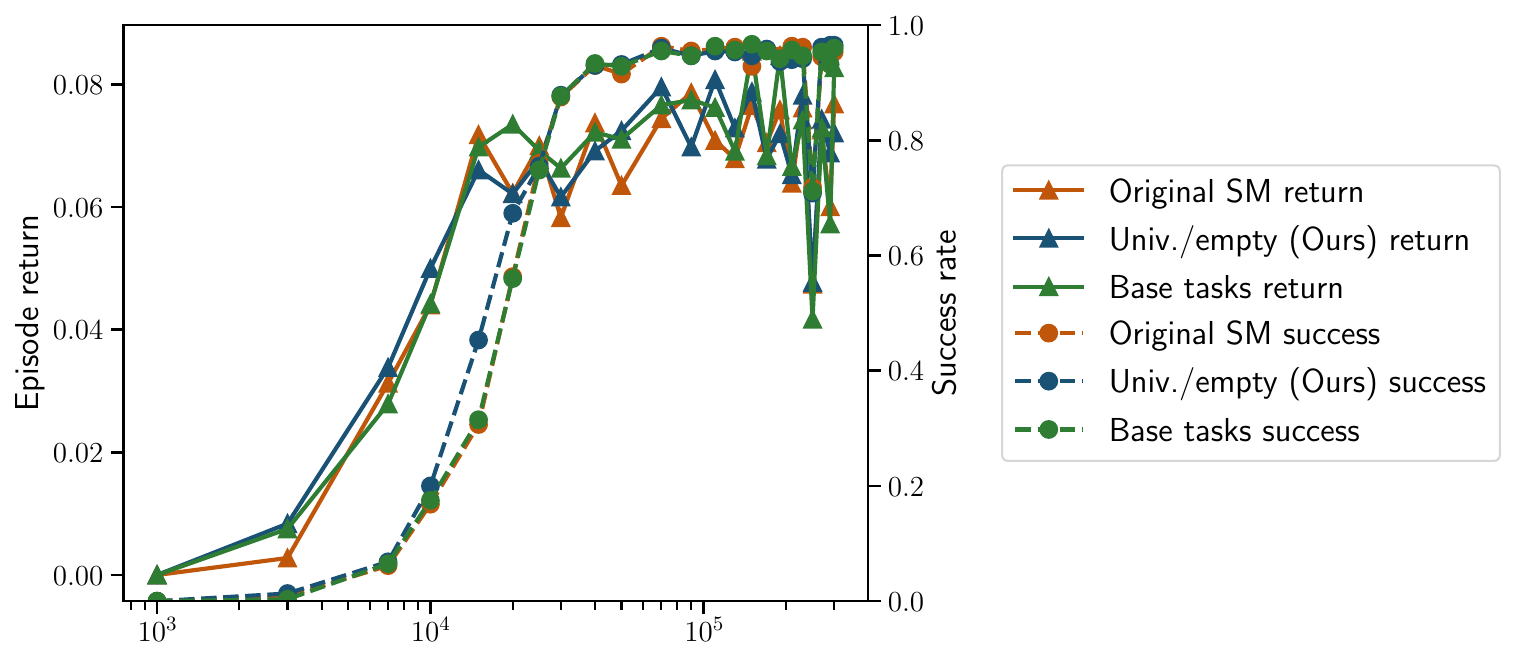}
        \caption{Average returns (left axis) and success rate (right axis).}
        \label{fig:office:returns_successes}
    \end{subfigure}
    \hfill 
    \begin{subfigure}[b]{0.4\textwidth}
        \centering
        \includegraphics[width=\textwidth]{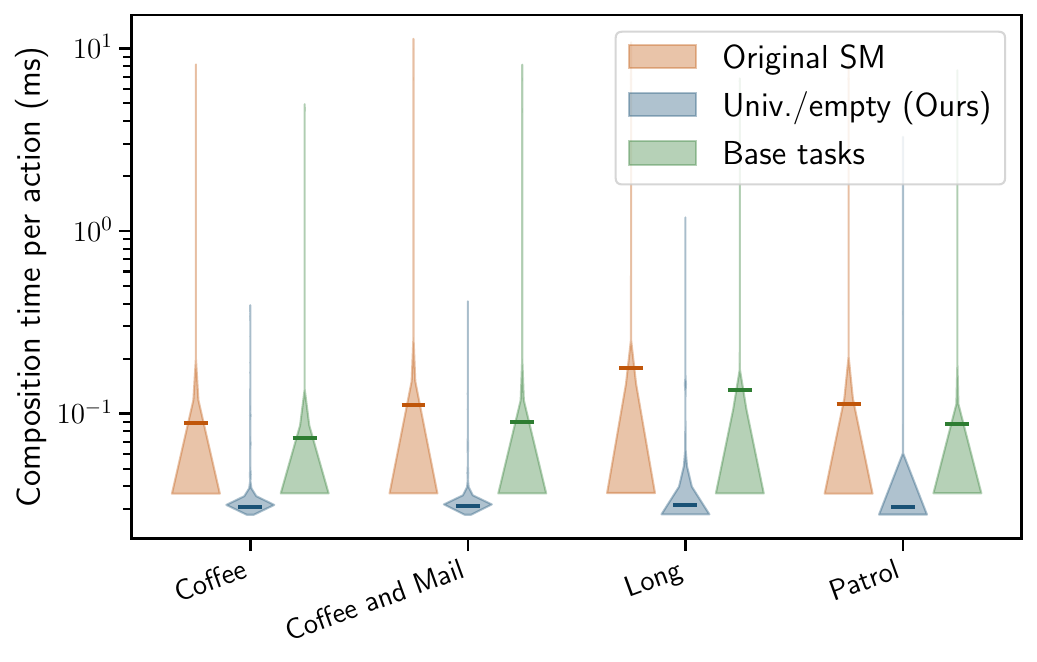}
        \caption{Violin plot of composition time (log sc.).}
        \label{fig:office:times}
    \end{subfigure}
    
    \caption{Evolution of average ($\pm$ std) successes and returns across $3$ runs and $4$ tasks in the temporal composition of actions for the Office Gridworld environment.
    }
    \label{fig:skill_machines_gridworld}
\end{figure}

\paragraph{Safety Gym environment.} The results for the safety gym environment further confirm the conclusions of the office Gridworld and can be found in Appendix \ref{appendix:b:safety}.

\section{Conclusion}
In this work we revisited the Boolean Task Algebra and studied structural properties of its environmental assumptions that lead to a simplified formulation of the optimal value functions, see Section \ref{section:extension_bta}.
First, we proved that the expressive space of extended optimal Q-value functions is fully determined by the universal and empty tasks, thereby confirming our first contribution in Lemma \ref{lemma:new_valuefunc}, that is, the effective collapse of the representation space to two fundamental value functions.
Building on this insight, we introduced an isomorphism in Lemma \ref{lemma:homomorphism} between tasks and subsets of goals, enabling a goal-set-based characterization of composition.
This establishes our second contribution, provided in Theorem \ref{thm:composition_using_selection}, as it allows logical operations to be performed directly on goal sets rather than through element-wise manipulation of extended Q-value functions.
We further analyze whether our proposed method would extend to stochastic MDPs and propose a counterexample in Proposition \ref{prop:counterex} where we show that the number of optimal policies to account for in order to achieve optimal composition is exponential in the number of total goals.

Finally, Section \ref{section:experiments} shows that our goal-set method reduces composition time across tabular, function approximation, visual and continuous control environments, for both standard and LTL task specifications. This includes training and composition time improvements for standard BTA, and composition-only gains for Skill Machines.

\bibliography{refs}

\newpage
\appendix
\section{Proofs}\label{appendix_proofs}

In this section we present proofs for the different mathematical statements.
We begin by restating the following result and definition.

\propSharedG*

\definitionGoalsets*

\subsection{Characterizing optimal extended value functions}
\llemaOptimal*

\begin{proof}\label{proof:lemma:new_valuefunc} \textit{(Lemma \ref{lemma:new_valuefunc})}
Let $M \in \mathcal{M}$, $g \in \mathcal{G}$, $(s, a) \in \mathcal{S} \times \mathcal{A}$.
If $g$ is unreachable from $s$, from the proof of Lemma 2 from \cite{tasse2020bta}, we know that $M_g = \mathcal{M}_{\mathcal{U}, g}$, where $M_g$ and $\mathcal{M}_{\mathcal{U}, g}$ are MDPs with the same transition dynamics as $M$ and $\mathcal{M}_{\mathcal{U}}$ and with reward functions $r_{M_g} (s, a) = \bar{r}_{M} (s, g, a)$ and $r_{\mathcal{M}_{\mathcal{U}, g}} = \bar{r}_{\universaltask} (s, g, a)$, respectively.
By uniqueness of the solution to the Bellman optimality equation, their optimal Q-functions must be identical.

We now study the case where $g$ is reachable from state $s$.
From Corollary 1 we know that for these $s, g, a, \; \exists \!\; G_{s:g,a}^* \in \R$ such that $\bar{Q}_{M}^*(s,g,a) = G_{s:g,a}^* + \bar{r}_{M}(s',g,a')$, where $s' \in \mathcal{G}$ and $a' = \underset{b \in \mathcal{A}}{\arg\max} \, \bar{r}_M (s', g, b)$. Furthermore, from the proof of Lemma 2 we know that $s' = g$, hence $\bar{Q}_{M}^*(s,g,a) = G_{s:g,a}^* + \bar{r}_{M}(g,g,a')$.
Note that $G_{s:g,a}^*$ is independent of the task.
We now proceed by cases.

\noindent
$\smallsquare$ If $g$ is a goal for task $M$, we have that by definition of $\bar{r}_{M}$ and $\bar{r}_{\universaltask}$, $\bar{r}_{M}(g,g,a') = r_{M}(g,a')$ and $\bar{r}_{\universaltask}(g,g,a') = r_{\universaltask}(g,a')$.
From the definition of  $\mathcal{M}$, $r_{M}(g,a'), r_{\universaltask}(g,a') \in \{\emptyreward, \universalreward\}$, and more concretely $r_{M}(g,a') = r_{\universaltask}(g,a') = \universalreward$ given that $g$ is a goal for both $M$ and $\universaltask$.
Consequently, we have that $\bar{r}_{M}(g,g,a') = \bar{r}_{\universaltask}(g,g,a')$ and together with Corollary 1 as well as the fact that $g$ and $a'$ are arbitrary, we obtain
\begin{align}
    \bar{Q}_{M}^*(s,g,a)
    = G_{s:g,a}^* + \bar{r}_{M}(g,g,a')
    = G_{s:g,a}^* + \bar{r}_{\universaltask}(g,g,a')
    = \bar{Q}_{\mathcal{U}}^*(s,g,a).
\end{align}

\noindent
$\smallsquare$ If $g$ is not a goal for task $M$, the reasoning is analogous,
$\bar{r}_{M}(g,g,a') = r_{M}(g,a') = \emptyreward = r_{\emptytask}(g,a') = \bar{r}_{\emptytask}(g,g,a')$. Finally,
\begin{align}
    \bar{Q}_{M}^*(s,g,a)
    = G_{s:g,a}^* + \bar{r}_{M}(g,g,a')
    = G_{s:g,a}^* + \bar{r}_{\emptytask}(g,g,a')
    = \bar{Q}_{\varnothing}^*(s,g,a).
\end{align}
\end{proof}

\subsection{A new goal-set based method for composition}

\llemaIsomorphism*

\begin{proof}\label{proof:lemma:homomorphism} \textit{(Lemma \ref{lemma:homomorphism})}
    Bijectivity follows directly from Definition \ref{def:goals_set}.
    We now prove that $\mathcal{F}$ is a homomorphism.
    Let $M_1 \in \mathcal{M}$.
    It suffices to prove the following:
    (i) $\mathcal{F}(M_1)' = \mathcal{F}(\neg M_1)$,
    (ii) $\mathcal{F}(M_1 \land M_2) = \mathcal{F}(M_1) \cap \mathcal{F}(M_2)$ and
    (iii) $\mathcal{F}(M_1 \lor M_2) = \mathcal{F}(M_1) \cup \mathcal{F}(M_2)$.

    \begin{enumerate}
    \item[(i)]
    We first prove the set inclusion $\mathcal{F}(\neg M_1) \subseteq \mathcal{F}(M_1)'$.
    Let $g \in \mathcal{F}(\neg M_1)$, then $\bar{r}_{\neg M_1} (g, g, a) = r_{\neg M_1}(g, a) = \universalreward$.
    Now, $\bar{r}_{\neg (\neg M_1)} (g, g, a) = r_{\neg (\neg M_1)} (g, a) = (r_\universaltask (g, a) + r_\emptytask (g, a)) - r_{\neg M_1} (g, a) = \universalreward + \emptyreward - \universalreward = \emptyreward$. Hence, $g \notin \mathcal{F}(\neg (\neg M_1)) = \mathcal{F}(M_1)$, and thus $g \in \mathcal{F}(M_1)'$.
    The previous reasoning can be traversed in reverse order to obtain $\mathcal{F}(M_1)' \subseteq \mathcal{F}(\neg M_1)$.

    \item[(ii)]
    $\mathcal{F}(M_1) \cap \mathcal{F}(M_2) \subseteq \mathcal{F}(M_1 \land M_2)$ follows directly from the definition of operator $\land$ over $\mathcal{M}$.
    We now prove that $\mathcal{F}(M_1 \land M_2) \subseteq \mathcal{F}(M_1) \cap \mathcal{F}(M_2)$.
    Let $g \in \mathcal{F}(M_1 \land M_2)$, then $\bar{r}_{M_1 \land M_2} (g, g, a) = \universalreward$.
    We now proceed by contradiction, suppose that $g \notin \mathcal{F}(M_1) \cap \mathcal{F}(M_2)$, this is    
    $\bar{r}_{M_1}(g, g, a) = r_{M_1}(g, a) = \emptyreward$ or $\bar{r}_{M_2}(g, g, a) = r_{M_2}(g, a) = \emptyreward$.
    Then, $\bar{r}_{M_1 \land M_2} (g, g, a) = r_{M_1 \land M_2}(g, a) = \min \{ r_{M_1}(g, a), r_{M_2}(g, a) \} = \emptyreward$, which is a contradiction.
    Consequently $g \in \mathcal{F}(M_1) \cap \mathcal{F}(M_2)$.

    \item[(iii)]
    $\mathcal{F}(M_1) \cup \mathcal{F}(M_2) \subseteq \mathcal{F}(M_1 \lor M_2)$ follows directly from the definition of operator $\lor$ over $\mathcal{M}$.
    We now provex that $\mathcal{F}(M_1 \lor M_2) \subseteq \mathcal{F}(M_1) \cup \mathcal{F}(M_2)$ in an analogous manner to (ii).
    Let $g \in \mathcal{F}(M_1 \lor M_2)$, then $\bar{r}_{M_1 \lor M_2} (g, g, a) = \universalreward$.
    Suppose that $g \notin \mathcal{F}(M_1) \cup \mathcal{F}(M_2)$, this is $\bar{r}_{M_1} (g, g, a) = \bar{r}_{M_2} (g, g, a) = \emptyreward$.
    Then, $\bar{r}_{M_1 \lor M_2} (g, g, a) = r_{M_1 \lor M_2}(g, a) = \max \{ r_{M_1}(g, a), r_{M_2}(g, a) \} = \emptyreward$ leads to another contradiction, finally giving us that $g \in \mathcal{F}(M_1) \cup \mathcal{F}(M_2)$.
    \end{enumerate}
\end{proof}

Theorem \ref{thm:composition_using_selection} builds on Lemmas \ref{lemma:new_valuefunc} and \ref{lemma:homomorphism} and formalizes the new proposed method for composition.
\thmOne*

\begin{proof} \textit{(Theorem \ref{thm:composition_using_selection})}
    Using Lemma \ref{lemma:new_valuefunc} we write the extended Q-value functions as a selection of state-action slices from the universal and empty tasks.
    We then use the isomorphism defined in Lemma \ref{lemma:homomorphism} to write the set of desired goals for the composite tasks, this is $\gplus_{\neg M_1}, \gplus_{M_1 \lor M_2}$ and $\gplus_{M_1 \land M_2}$, as composition of the sets of desired goals for tasks $M_1$ and $M_2$, i.e., $\gplus_{\neg M_1} = (\gplus_{M_1}) ' = \mathcal{G} \setminus \gplus_{M_1}$, $\gplus_{M_1 \lor M_2} = \gplus_{M_1} \cup \gplus_{M_2}$ and $\gplus_{M_1 \land M_2} = \gplus_{M_1} \cap \gplus_{M_2}$.
\end{proof}

\subsection{Proof that the optimal slices are ordered.}

\corollaryOne*

\begin{proof} \textit{(Corollary \ref{cor:q-value-empty-less-than-universal})}
    Let $g \in \mathcal{G}$.
    If $g$ is unreachable from $s$, from the proof of Lemma 2 from \cite{tasse2020bta} we know that $\mathcal{M}_{\varnothing, g} = \mathcal{M}_{\mathcal{U}, g}$.
    By uniqueness of the solution to the Bellman optimality equation, their optimal Q-functions must be identical, thus obtaining equality in the statement.
    If $g$ is reachable from $s$, by Corollary 1 from \cite{tasse2020bta}, the extended Q-value function decomposes for all $s, a$ as follows:
    \begin{align*}
        \bar{Q}^*_{\varnothing} (s, g, a) &= G^*_{s:g,a} + \overline{r}_{\emptytask}(g, g, a') \;\; \text{and}\\
        \bar{Q}^*_{\mathcal{U}} (s, g, a) &= G^*_{s:g,a} + \overline{r}_{\universaltask}(g, g, a').
    \end{align*}
    The path-dependent return, $G^*_{s:g,a}$, is identical for both tasks as the optimal path to $g$ is task-independent.
    We now compare the terminal rewards. By definition of the tasks, $\bar{r}_{\emptytask}(g, g, a') = \emptyreward$ and $\bar{r}_{\universaltask}(g, g, a') = \universalreward$. By assumption $\emptyreward \le \universalreward$.
    Since $G^*_{s:g,a}$ is a shared constant, it follows directly that
    $\bar{Q}^*_{\varnothing} (s, g, a) \le \bar{Q}^*_{\mathcal{U}} (s, g, a)$.
\end{proof}

\subsection{Extending the Boolean Task Algebra to stochastic MDPs.}
Here we present the counterexample of extending the BTA framework to stochastic MDPs.
\propCounterex*

\begin{proof}
We will consider the following example MDP:
We have $n$ goals and 1 initial non-goal state $s_0$.
We consider all $2^\mathcal{G}$ possible goal sets. We also consider $|2^\mathcal{G}|$ actions. Each action is associated with a set of states $\tilde{\mathcal{G}}\in 2^\mathcal{G} $, so actions can be indexed accordingly as $a_{\tilde{\mathcal{G}}}$. The MDP is not discounted. Transition probabilities and rewards $r_{s,a}$ for non-terminal states are defined as:
$$r_{s,a_{\tilde{\mathcal{G}}}} = 0.5/|\tilde{\mathcal{G}}|^2, \quad T(s,a_{\tilde{\mathcal{G}}},s') = \begin{cases} 1/|\tilde{\mathcal{G}}| & \textnormal{if } s' \in \tilde{\mathcal{G}} \\ 0 & \textnormal{otherwise.} \end{cases}$$

Let's consider a goal set $\mathcal{G}^+$ and actions $a_{\mathcal{G}^+}$ and $a_{\tilde{\mathcal{G}}},$ with $\mathcal{G}^+ \subset \tilde{\mathcal{G}} $. 
The return for $a_{\mathcal{G}^+}$ will be $-0.5/|\mathcal{G}^+|^2 + 1$, whereas the return for  $a_{\tilde{\mathcal{G}}}$ will be $-0.5/(|\mathcal{G}^+| + \delta )^2 + |\mathcal{G}^+|/ (|\mathcal{G}^+|+\delta),$ with $\delta = |\tilde{\mathcal{G}} \setminus \mathcal{G}^+|$. 
We now look at the difference between these two returns multiplied by $|\mathcal{G}^+|^2 (|\mathcal{G}^+| + \delta)^2$: 
\begin{align}
&(|\mathcal{G}^+|^2 (|\mathcal{G}^+| + \delta)^2) \left(-0.5/|\mathcal{G}^+|^2 + 1 + 0.5/(|\mathcal{G}^+| + \delta )^2 - |\mathcal{G}^+|/ (|\mathcal{G}^+|+\delta)\right) \notag \\
&{} = - 0.5 (|\mathcal{G}^+| + \delta)^2 + |\mathcal{G}^+|^2 (|\mathcal{G}^+|+\delta)^2 +0.5 |\mathcal{G}^+|^2 - |\mathcal{G}^+|^3 (|\mathcal{G}^+|+\delta) \notag \\
&{} = 
|\mathcal{G}^+|^3 \delta + |\mathcal{G}^+|^2 \delta^2 -|\mathcal{G}^+|  \delta - 0.5 \delta ^2 = (|\mathcal{G}^+|^3 - |\mathcal{G}^+|)\delta + (|\mathcal{G}^+|^2 - 0.5) \delta^2 .  \notag 
\end{align}
Since $|\mathcal{G}^+|^3 \ge |\mathcal{G}^+|$ and $|\mathcal{G}^+|^2 > 0.5$ when $|\mathcal{G}^+| \ge 1$ and $\delta>0$, this scaled difference is larger than zero, so choosing an action associated with $\tilde{\mathcal{G}} \supset \mathcal{G}^+ $ is sub-optimal. 
We also know that
\begin{itemize}
\item[$\smallsquare$] Actions $a_{\tilde{\mathcal{G}}}$ with $\tilde{\mathcal{G}} \subset \mathcal{G}^+ $ are suboptimal, as they will achieve the same goal-reaching reward but a more negative immediate reward at $s_0$.
\item[$\smallsquare$] Actions $a_{\tilde{\mathcal{G}}}$ with $\tilde{\mathcal{G}} \not\subset \mathcal{G}^+ $ and $\tilde{\mathcal{G}} \not\supset \mathcal{G}^+ $, always achieve a worse return than actions associated with a set with the same cardinality, but that include more states from $\mathcal{G}^+$ since the expected goal-reaching reward will be higher. So they certainly achieve a worse return than $a_{\mathcal{G}^+}$. 
\end{itemize}
Thus, for any goal set $\mathcal{G}^+$, only the policy that always picks action $a_\mathcal{G}^+$ is optimal.
\end{proof}

\newpage
\section{Additional experimental results}\label{appendix_b}
\subsection{Detailed Experimental Design}
\label{appendix:b:detailed-experimental-design}

Unless stated otherwise, we use the default hyperparameters from the original
environment or algorithm implementations used in the corresponding papers \citep{tasse2020bta, tasse2024skillmachinestemporallogic}.
This appendix only records the choices that are specific to our convergence
experiments.

\paragraph{Rooms environment.}
\label{app:design-four-rooms}

Training is based on tabular goal-oriented Q-learning.
The trained value functions are the universal task, the empty task, and the $\lceil\log_2 4\rceil=2$ binary basis tasks.
Training uses sparse rewards with shared terminal states.
Goal-oriented Q-learning uses the implementation defaults: discount $1$, learning rate $1$, $\epsilon$-greedy exploration with $\epsilon=1$, and a maximum of $100$ steps per episode.
The training budgets range between $10$ and $5000$ episodes per trained task, as we empirically see that this range is enough for convergence.
Evaluation samples tasks proportionally by task cardinality: with $4$ goals this gives $15$ sampled tasks, of which the empty and universal tasks are excluded from the composed-task evaluation, leaving $13$ tasks.
For each method we evaluate on $1000$ rollouts, with step horizon $100$.

\paragraph{Boxman environment.}
\label{app:design-boxman-sts}
We train four DQN value functions per independent run: \texttt{on}, \texttt{off}, \texttt{blue} ($B$), and \texttt{square} ($S$).
These are evaluated on five composed tasks: $B$, $S$, $B+S$, $B.S$, and $B\operatorname{xor}S$.
We run $3$ independent full runs.
Each DQN receives the RGB observation concatenated with the RGB goal image, giving $6$ input channels.
The network is the original Boxman DQN architecture: convolutional layers $32$ filters of size $8$ and stride $4$, $64$ filters of size $4$ and stride $2$, $64$ filters of size $3$ and stride $1$, followed by a fully connected layer of width $512$ and a linear action head.
We use Adam with learning rate $10^{-4}$, discount $0.99$, batch size $128$, replay buffer size $3\cdot10^5$, learning starts after $10^4$ transitions, training frequency $4$, target network update frequency $1000$, and Huber loss with gradient clipping to $[-1,1]$.
Evaluation uses $100$ episodes per task, method, checkpoint, and completed run, with maximum trajectory length $20$.

\paragraph{Office-World environment.}
\label{app:design-office}

Office-World uses tabular value functions.
The evaluation tasks are Coffee, Patrol, Coffee and Mail, and Long and the primitive library is
$
    \mathcal{B}
    =
    \{0,1\}
    \cup
    \{p_\varphi:\varphi\in\{A,B,C,D,c,d,m,o,tm,to\}\}
    \cup
    \{c_d\},
$
so each run trains $13$ tabular world value functions.
The propositions $A,B,C,D$ denote rooms, $c$ coffee, $m$ mail, $o$ office, $tm$ and $to$ the dynamic mail/person availability propositions, and $d$ the decoration constraint.
We use Q-learning with learning rate $0.1$, discount $0.9$, $\epsilon$-greedy exploration with $\epsilon=0.5$, and zero initialization.
We run $3$ independent seeds.
Evaluation uses $50$ episodes per task, method, and checkpoint, with horizon $1000$ and evaluation discount $0.9$.
We compare Original, Univ./empty, and Base tasks; the first two use only the learned $0$ and $1$ primitives, while Base tasks can use all primitives in $\mathcal{B}$.
The LTL specifications in the environment are:
\begin{enumerate}[leftmargin=2.5cm]
    \item[\textbf{Coffee}:] Deliver coffee to the office without breaking any decorations.
    \item[\textbf{Patrol}:] Visit rooms A, B, C, and D in sequence without breaking any decorations.
    \item[\textbf{Coffee \& Mail}:] Deliver both coffee and mail to the office, in either order, without breaking any decorations.
    \item[\textbf{Long}:] Deliver mail until none remains, then deliver coffee while people are present in the office, and finally patrol rooms A, B, C, D, and A, without breaking any decorations.
\end{enumerate}

\paragraph{Safety-Gym environment.}
\label{app:design-safety-gym}

Safety-Gym is the continuous-control function-approximation experiment.
The learned primitive library is
$
    \{0,1,p_{\textsc{buttons}},p_{\textsc{goal}},
    p_{\textsc{hazards}},c_{\textsc{hazards}}\},
$
so each run trains $6$ world value functions.
Each primitive is trained with TD3.
The actor and critic use hidden layers $[2024,2024,2024]$ and Gaussian action noise has standard deviation $0.2$.
The main training parameters are learning rate $10^{-6}$, discount $0.99$, batch size $32$ and a replay buffer size $10^6$ for $2 \times 10^6$ steps.
We run $2$ independent seeds and evaluation uses $500$ episodes per task, method, and checkpoint, with horizon $1000$ and evaluation discount $0.99$.
We now describe the LTL tasks from this experiment as defined in \cite{tasse2024skillmachinestemporallogic}:
\begin{enumerate}
    \item[\textbf{Task 1}]: Navigate to a button and then to a cylinder.
    \item[\textbf{Task 2}]: Navigate to a button and then to a cylinder while never entering blue regions.
    \item[\textbf{Task 3}]: Navigate to a button, then to a cylinder without entering blue regions, then to a button inside a blue region, and finally to a cylinder again.
    \item[\textbf{Task 4}]: Navigate to a button and then to a cylinder in a blue region.
    \item[\textbf{Task 5}]: Navigate to a cylinder, then to a button in a blue region, and finally to a cylinder again.
    \item[\textbf{Task 6}]: Navigate to a blue region, then to a button with a cylinder, and finally to a cylinder while avoiding blue regions.
\end{enumerate}

\subsection{Additional experimental results}
\subsubsection{Additional results for Rooms environment}
\paragraph{Illustrating learned policies.}
In Figure \ref{fig:learned_policies} we illustrate the policies derived from the learned value functions for the rooms environment.
Figure \ref{fig:1} shows the optimal policy derived from maximizing over actions in the slice corresponding to a given goal $g$, this is $\bar{Q}_\mathcal{U}^* (\cdot, g, \cdot)$.
Similarly, Figure \ref{fig:2} illustrates the optimal policy derived from not following $g$, this is $\bar{Q}_\varnothing^* (\cdot, g, \cdot)$.
These goal slices are then combined via maximization over the goal and actions dimensions to obtain the policies illustrated in Figures \ref{fig:3} and \ref{fig:4}.
We note that both the slices and the global policies are similar in shape, the key difference being the range of rewards associated to each of them.

\begin{figure}[htb]
    \centering
    \begin{minipage}{0.23\textwidth} 
        \centering
        \includegraphics[width=\linewidth]{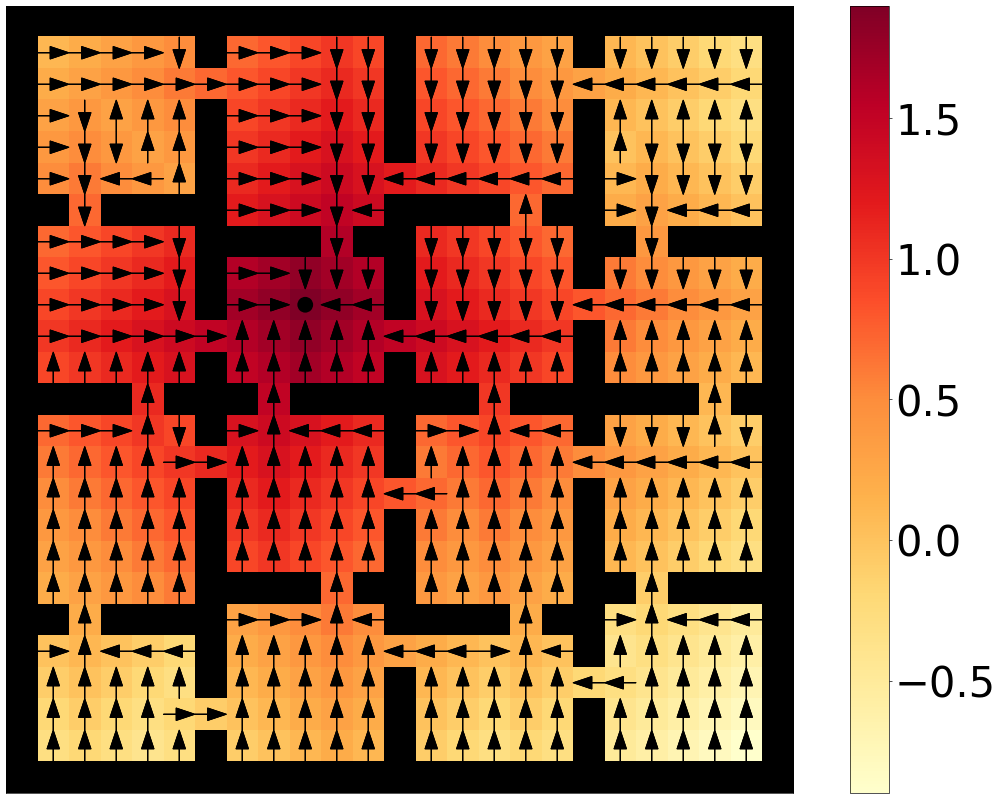}
        \subcaption{Optimal policy for pursuing $g$.}
        \label{fig:1}
    \end{minipage}
    \hfill
    \begin{minipage}{0.24\textwidth}
        \centering
        \includegraphics[width=\linewidth]{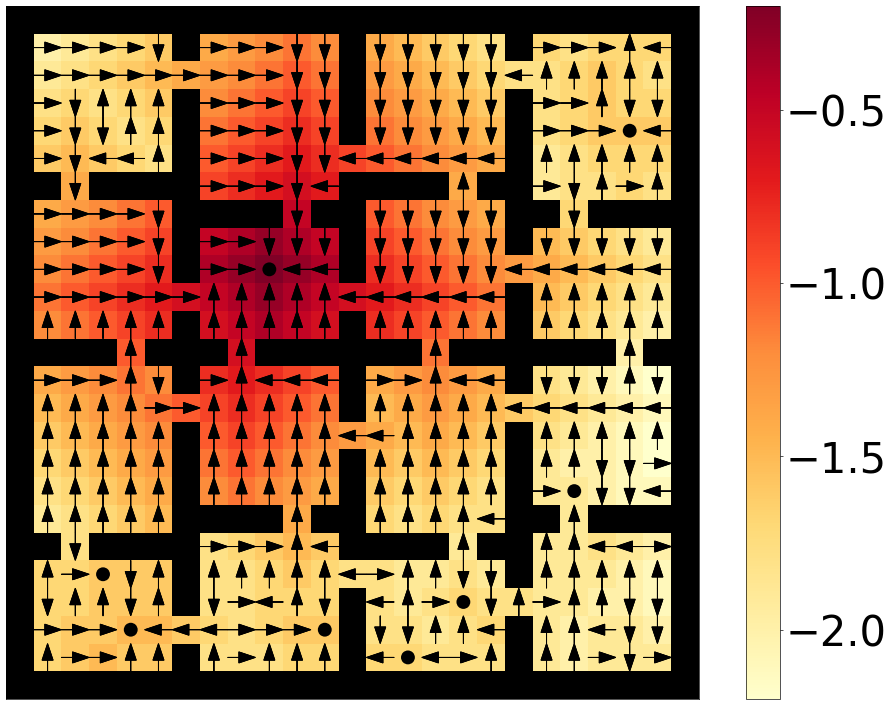}
        \subcaption{Optimal policy for not pursuing $g$.}
        \label{fig:2}
    \end{minipage}
    \hfill
    \begin{minipage}{0.23\textwidth}
        \centering
        \includegraphics[width=\linewidth]{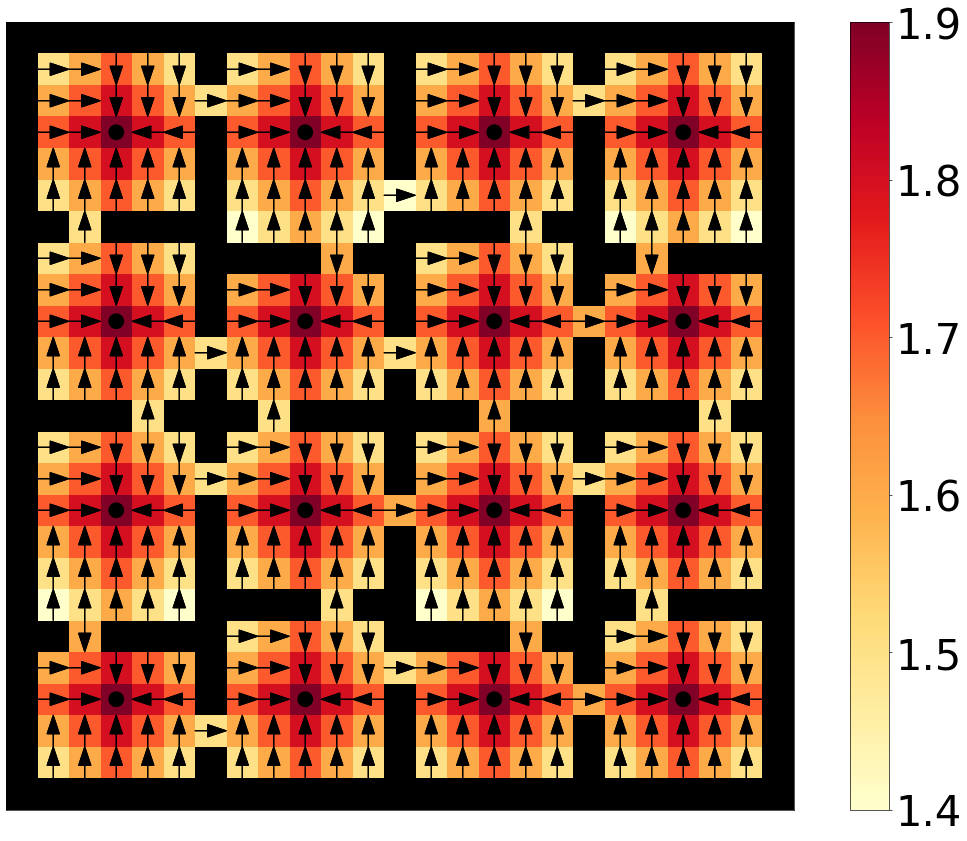}
        \subcaption{Optimal policy for pursuing any goal in $\mathcal{G}$.}
        \label{fig:3}
    \end{minipage}
    \hfill
    \begin{minipage}{0.24\textwidth}
        \centering
        \includegraphics[width=\linewidth]{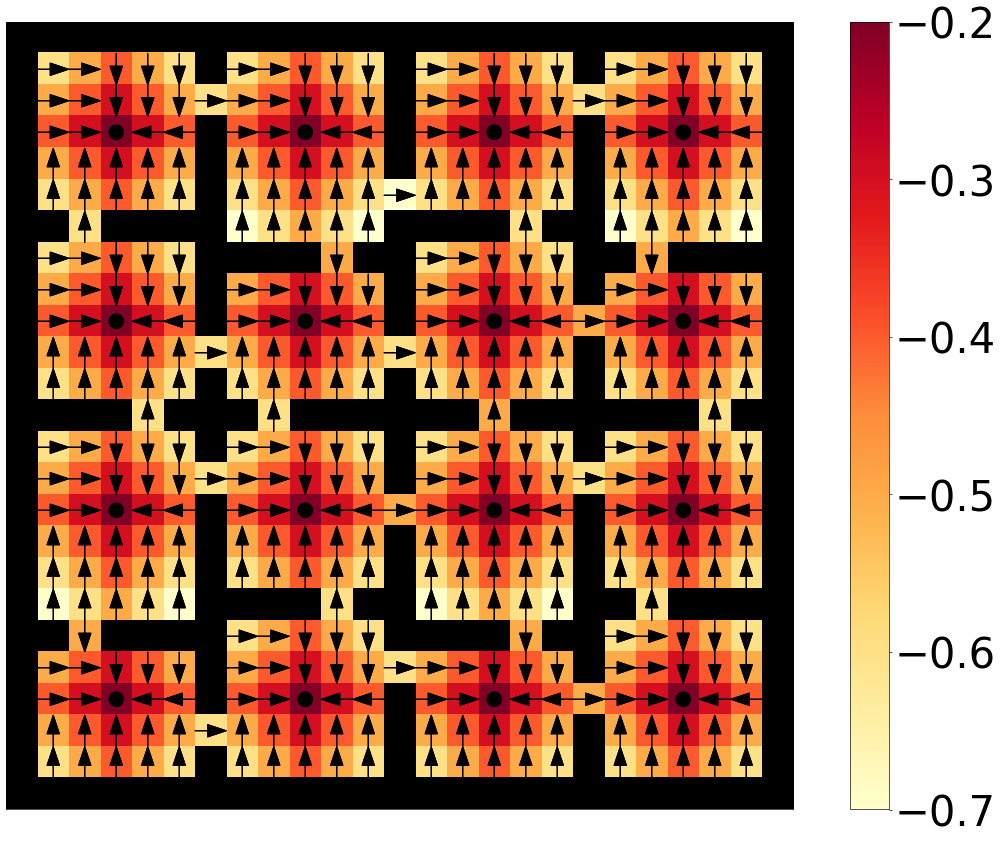}
        \subcaption{Optimal policy for not pursuing any goal in $\mathcal{G}$.}
        \label{fig:4}
    \end{minipage}
    \caption{Optimal policies derived from the learned value functions for the 16 goals environment.}
    \label{fig:learned_policies}
\end{figure}

\paragraph{Performance upon convergence.}
Figure \ref{fig:returns_compared} provides a side-by-side comparison of the returns obtained with the original and proposed composition methods.
We observe that the returns obtained for both methods are identical, given that we have provided enough maximum iterations for convergence and confirm that theoretical optimality also holds in a practical setting.
We also observe that expected returns are higher as the number of goals in the task increases, which can be explained by the fact that the agent requires less steps on average to reach any of the goals, hence incurring in lower time-step penalties.

\begin{figure}[h]
    \centering
    \includegraphics[width=0.95\linewidth]{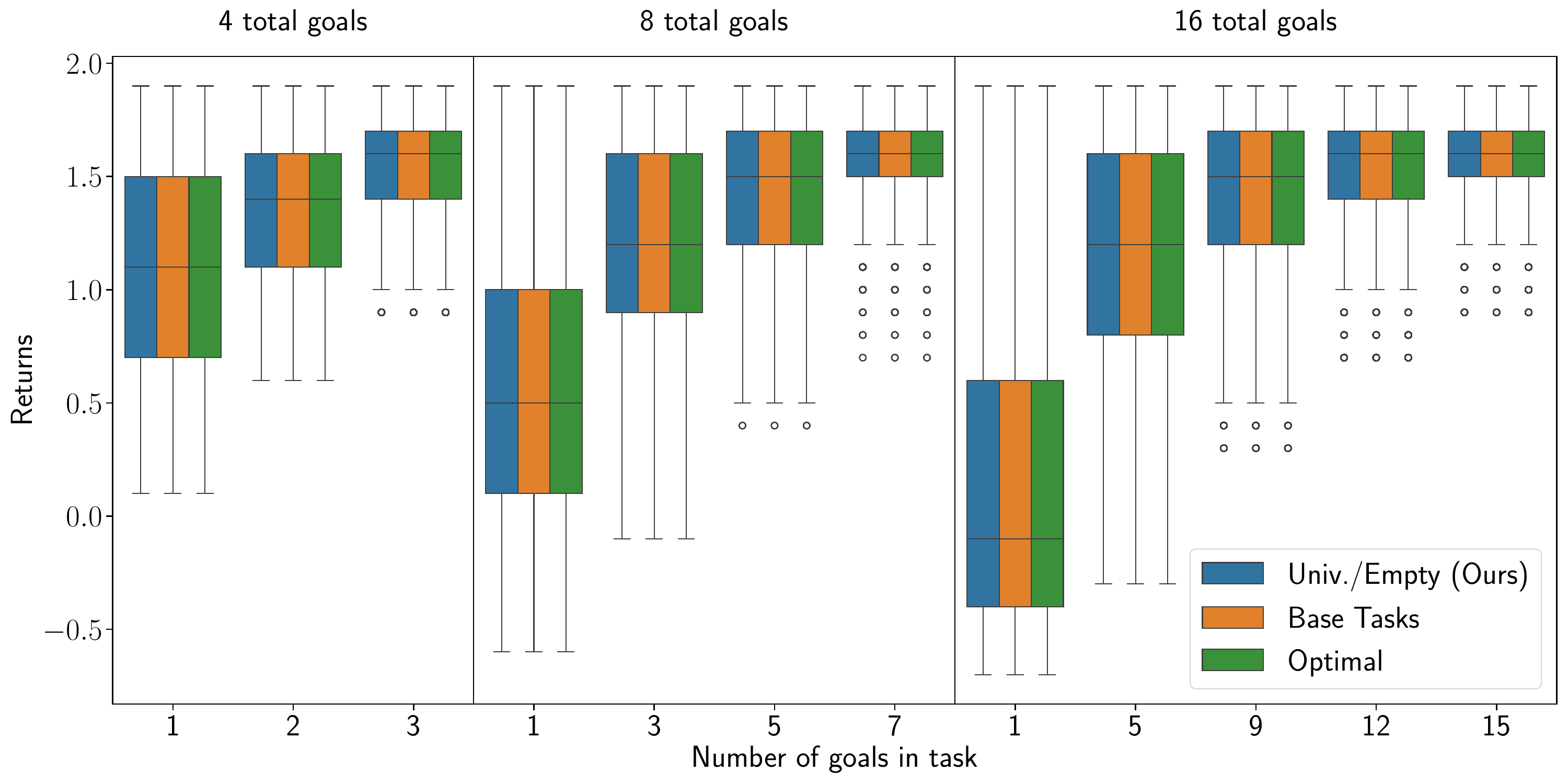}
    \caption{
    Boxplots showing the quartiles of the returns of the original composition method (using base tasks) and composition using the universal and empty tasks.
    We aggregate the results per goal set length.
    The returns for each task is measured on $10^4$ episodes and the value functions are obtained with $5000$ maximum iterations of the training algorithm.
    }
    \label{fig:returns_compared}
\end{figure}

\subsubsection{Additional results for Boxman environment}
Figure \ref{fig:pertask_boxman} shows per task returns and successes for the Boxman environment.
We confirm a similar pattern among all tasks, where the base tasks method converges faster to the local optimum but both methods eventually stabilize at a similar performance level.
\begin{figure}[h]
    \centering
    \includegraphics[width=1\linewidth]{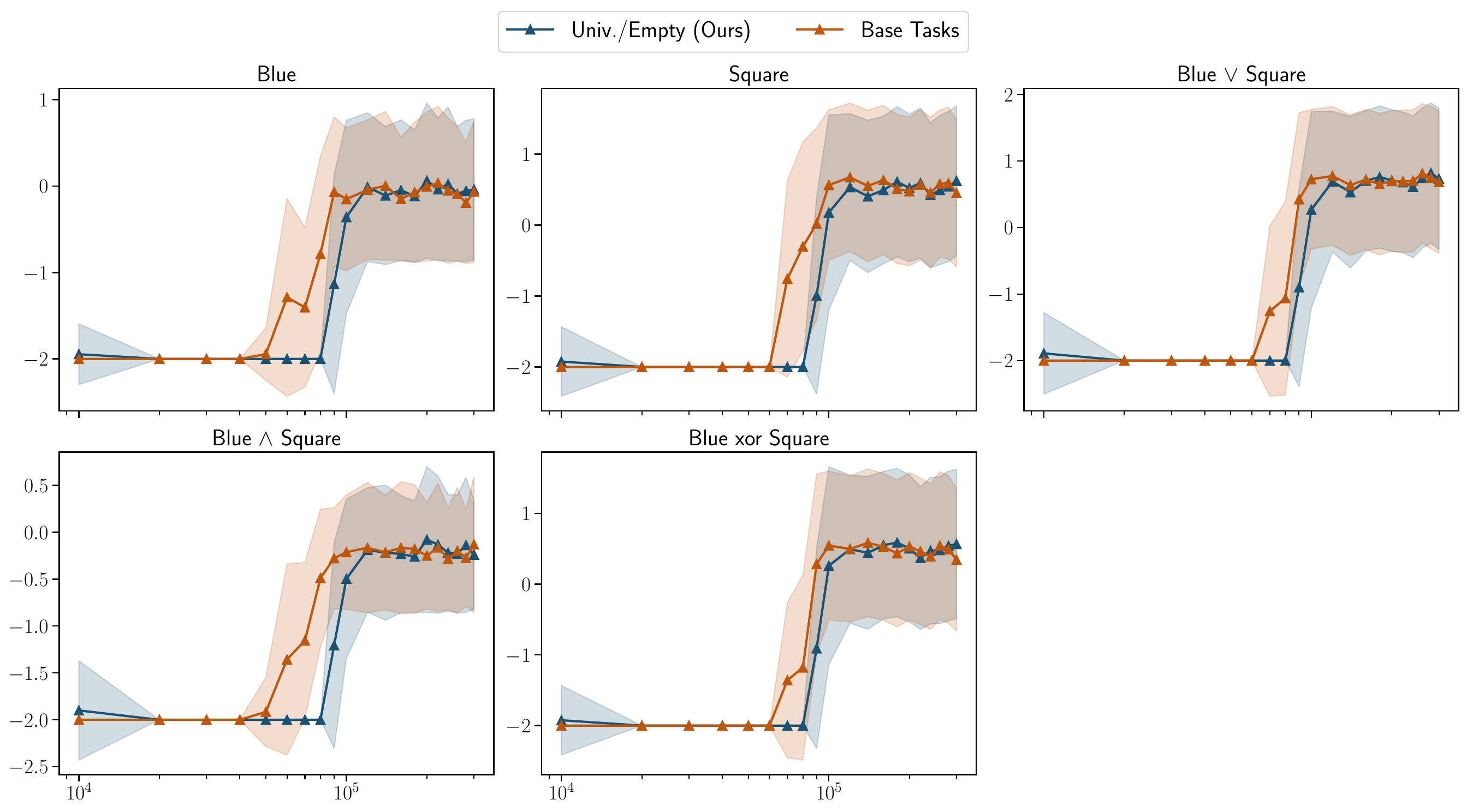}
    \caption{Average returns ($\pm$ std) for the boxman environment. The x-axis shows the number of training iterations per UVFA, hence base tasks requires twice as much compute power given that it needs to train more value functions.}
    \label{fig:pertask_boxman}
\end{figure}

\subsubsection{Additional results for Office Gridworld environment}\label{appendix:b:office}
Figure \ref{fig:skill_machines_gridworld_pertask} shows per task returns and successes for the Office Gridworld domain.
It is more informative to look at the success graphs in Figure \ref{fig:skill_machines_gridworld_pertask:b}, where we see that the percentage of successes in the LTL specification converged to 1, meaning that the agent correctly learn to execute the task.
It is important to note that we re-use the universal and empty value functions learned during the base task learning, hence there is a correlation in the performance of all three methods given by the particularities of the three runs (e.g. sudden collapse Figure \ref{fig:skill_machines_gridworld_pertask:a} for the Long and Patrol tasks, or the jittery pattern of Coffee and Mail).
However, if we attend to the average presented in Figure \ref{fig:office:returns_successes} we see a more stable convergence across iterations.
\begin{figure}[h]
    \centering
    \begin{subfigure}[b]{\textwidth}
        \centering
        \includegraphics[width=\textwidth]{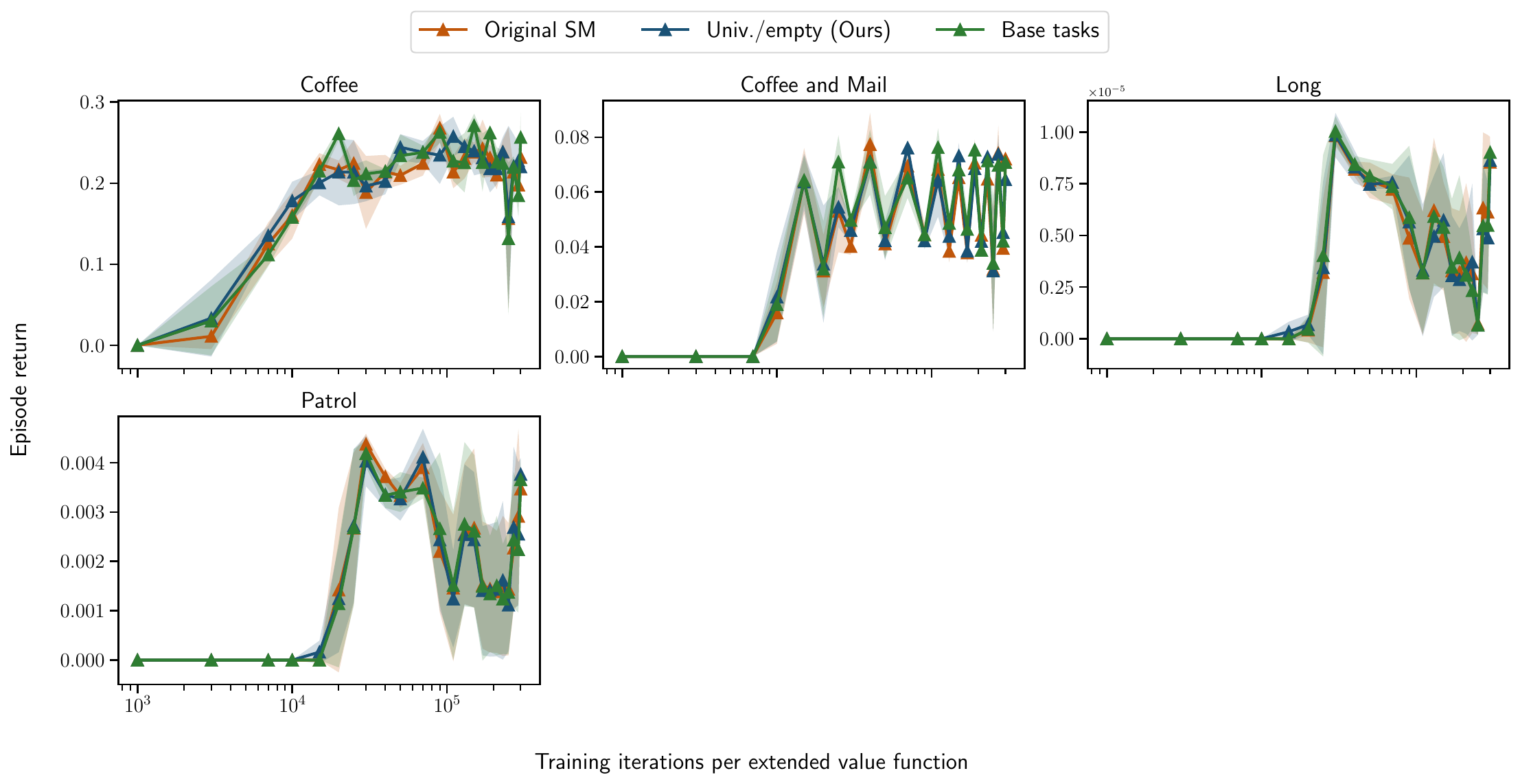}
        \caption{Total rewards disaggregated per task.}
        \label{fig:skill_machines_gridworld_pertask:a}
    \end{subfigure}
    \newline 
    \begin{subfigure}[b]{\textwidth}
        \centering
        \includegraphics[width=\textwidth]{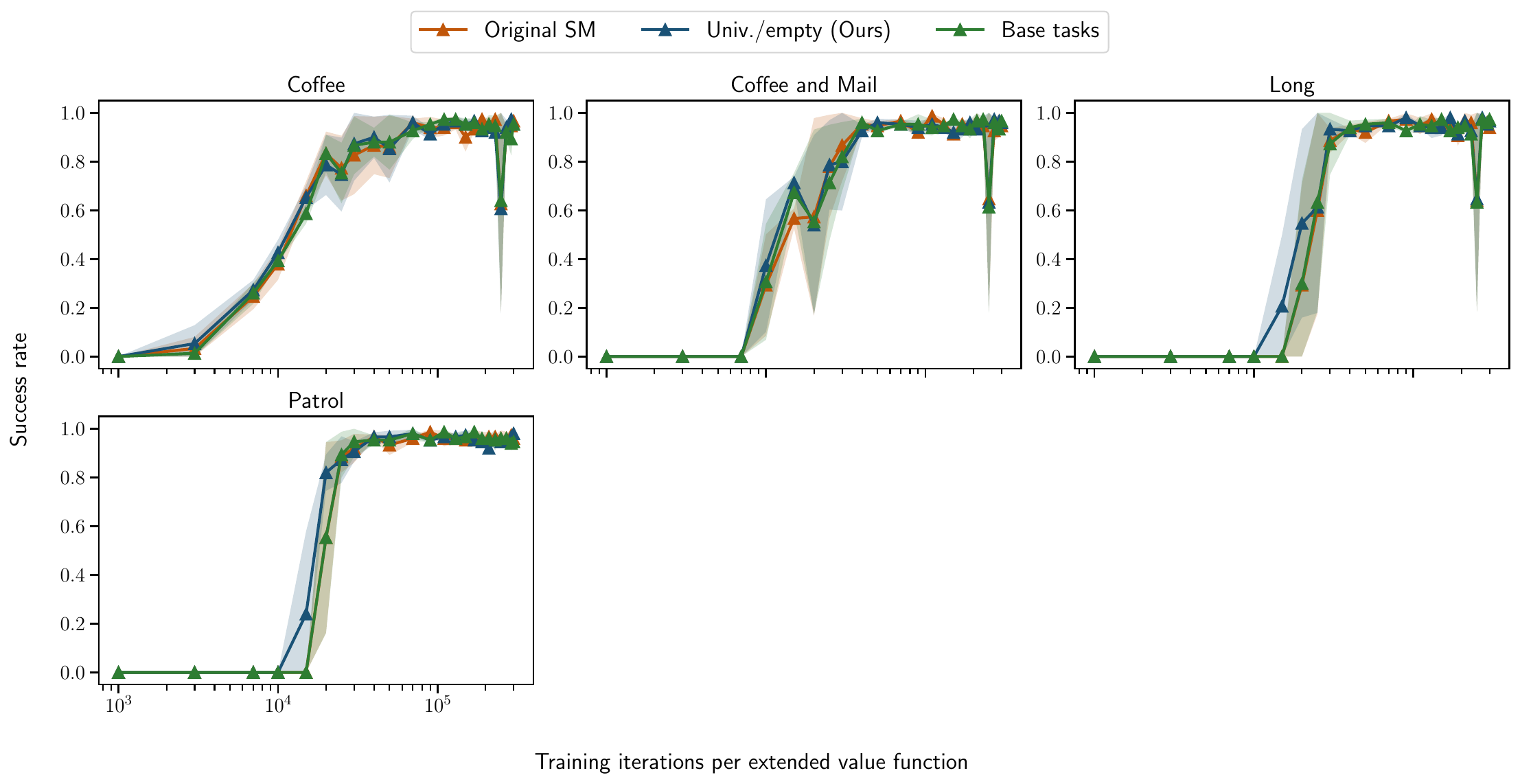}
        \caption{Total successes of the LTL specification  disaggregated per task.}
        \label{fig:skill_machines_gridworld_pertask:b}
    \end{subfigure}
    \caption{Evolution of average ($\pm$ std) successes and returns across runs in the temporal composition of actions for the main 4 tasks of the Office Gridworld.
    Results are reported over $3$ runs.
    See Appendix \ref{appendix:b:detailed-experimental-design} for explanation on the LTL task specifications.
    }
    \label{fig:skill_machines_gridworld_pertask}
\end{figure}

\subsubsection{Additional results for Safety gym environment}\label{appendix:b:safety}
We ran the Safety Gym experiment in order to evaluate the temporal-composition setting with continuous control and function approximation. Figure~\ref{fig:safety_gym_results} shows the evolution of the average return and success rate as a function of the number of training iterations per extended value function. The three methods exhibit similar learning trends, with performance improving as training progresses. This confirms that the computational advantages derived from learning extra base tasks does not translate into a better overall performance as compared to using only the universal and empty tasks.
Moreover, this also means that the improved composition cost does not translate to a worse performance (Original vs Univ./Empty).

Figure~\ref{fig:safety_composition_time} complements these results by reporting the composition-time distribution for the same Safety Gym setting. The Univ./empty construction substantially reduces the cost of composing extended value functions, while preserving comparable return and success performance. Thus, the Safety Gym experiment supports the main empirical conclusion: using only the universal and empty tasks improves computational efficiency without degrading the quality of the composed policy.

\begin{figure}[t]
    \centering
    \begin{subfigure}{0.48\linewidth}
        \centering
        \includegraphics[width=\linewidth]{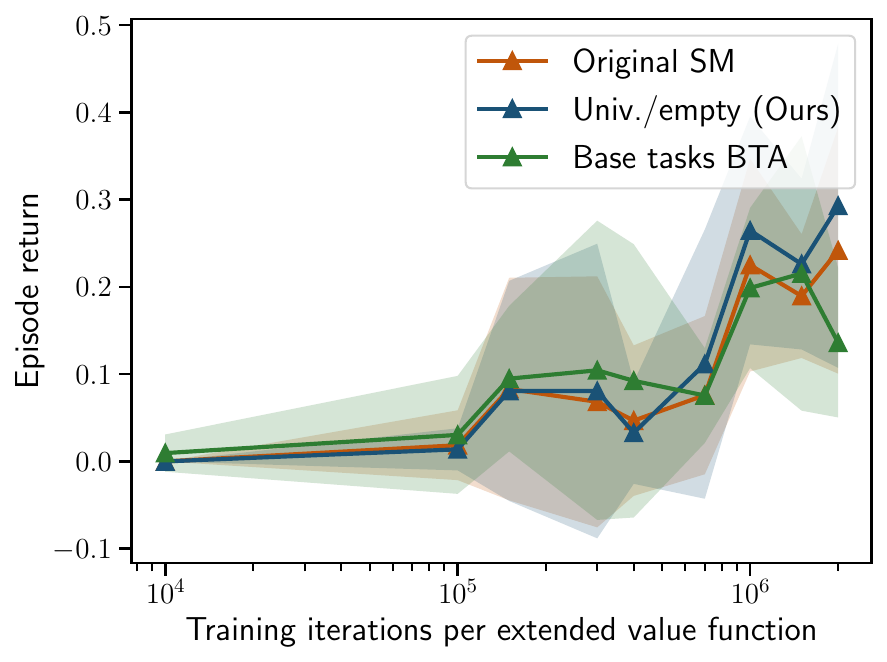}
        \caption{Average episode return.}
        \label{fig:safety_returns}
    \end{subfigure}
    \hfill
    \begin{subfigure}{0.48\linewidth}
        \centering
        \includegraphics[width=\linewidth]{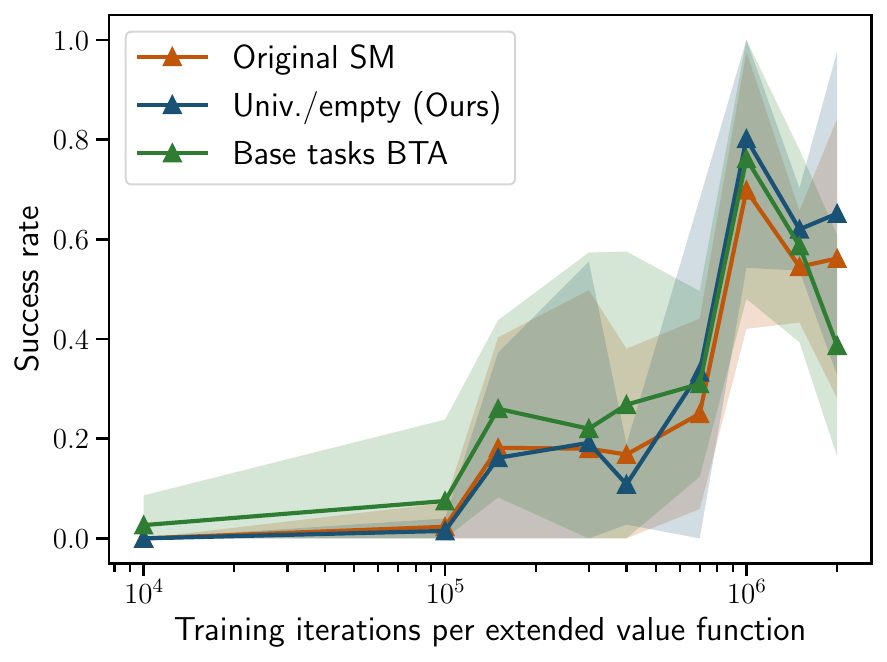}
        \caption{Average success rate.}
        \label{fig:safety_successes}
    \end{subfigure}
    \caption{Evolution of average return and success rate in the Safety Gym environment. Results compare methods for a fixed number of training iterations per UVFA.}
    \label{fig:safety_gym_results}
\end{figure}

\begin{figure}[t]
    \centering
    \includegraphics[width=0.55\linewidth]{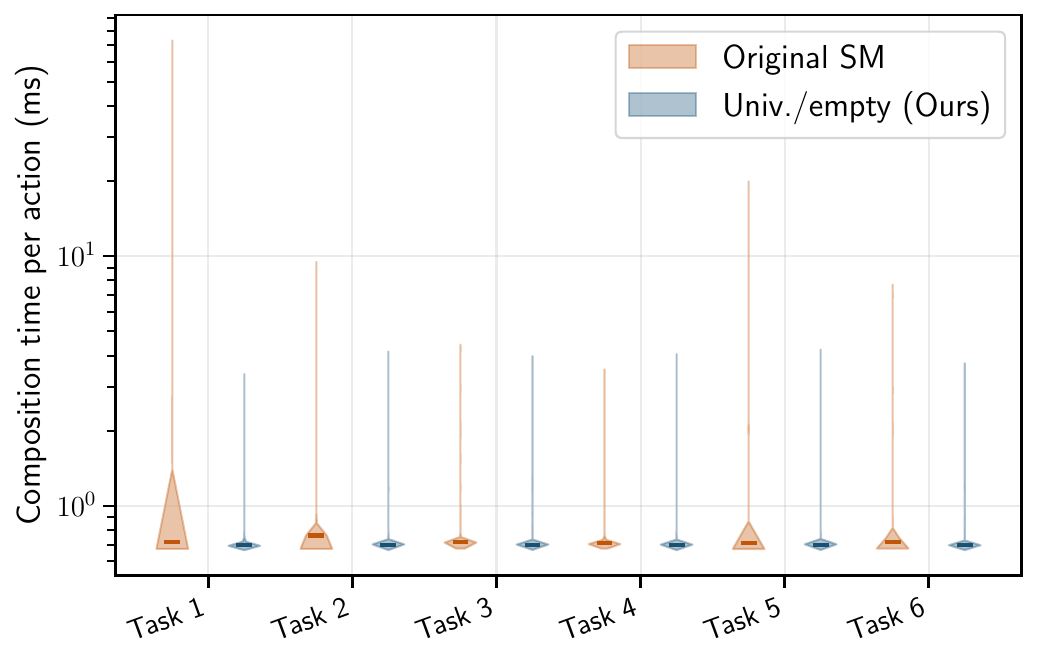}
    \caption{Distribution of composition times in the Safety Gym environment. The Univ./empty composition reduces the computational cost of composing extended value functions compared with the original Skill Machines composition, while maintaining comparable performance as shown in Figure~\ref{fig:safety_gym_results}.
    See Appendix \ref{appendix:b:detailed-experimental-design} for explanation on the LTL task specifications.
    }
    \label{fig:safety_composition_time}
\end{figure}

\end{document}